\title{Breaking the Curse of Horizon: \\ Infinite-Horizon Off-Policy Estimation}
\author{
{\rm Qiang Liu}\\
{\rm The University of Texas at Austin}\\
{\rm Austin, TX, 78712}\\
{\rm \texttt{lqiang@cs.utexas.edu}} 
\and 
{\rm Lihong Li}\\
{\rm Google Brain}\\
{\rm Kirkland, WA, 98033}\\
{\rm \texttt{lihong@google.com}} 
\And 
{\rm Ziyang Tang}\\
{\rm The University of Texas at Austin}\\
{\rm Austin, TX, 78712}\\
{\rm \texttt{ztang@cs.utexas.edu}}
\and
{\rm Dengyong Zhou}\\
{\rm Google Brain}\\
{\rm Kirkland, WA, 98033}\\
{\rm \texttt{dennyzhou@google.com}}
}
\begin{document}


\maketitle

\begin{abstract}
We consider off-policy estimation of the expected reward of a target policy using samples collected by a different behavior policy. 
Importance sampling (IS) has been a key technique for deriving (nearly) unbiased estimators, 
but is known to suffer from an excessively high variance in long-horizon problems.  
In the extreme case  
of \emph{infinite}-horizon problems, 
the variance of an IS-based estimator may even be unbounded. 
%
%
In this paper, we propose a new off-policy estimator that applies IS \emph{directly} on the stationary state-visitation distributions to avoid the exploding variance faced by existing methods.
%
%
Our key contribution is a novel approach to estimating the density ratio of two \emph{stationary state distributions}, with \emph{trajectories} sampled from only the behavior distribution. 
We develop a mini-max loss function for the estimation problem, and derive a closed-form solution for the case of RKHS.
We support our method with both theoretical  and empirical analyses. 
\end{abstract}

\section{Introduction}

Reinforcement learning (RL)~\cite{sutton98beinforcement} is one of the most successful approaches to artificial intelligence, and has found successful applications in robotics, games, dialogue systems, and recommendation systems, among others.  One of the key problems in RL is policy evaluation: given a fixed policy, estimate the average reward garnered by an agent that runs this policy in the environment.  In this paper, we consider the off-policy estimation problem, in which we want to estimate the expected reward of a given target policy with samples collected by a different behavior policy. 
This problem is of great practical importance in many application domains where deploying a new policy can be costly or risky, such as medical treatments~\cite{murphy01marginal}, econometrics~\cite{hirano2003efficient}, recommender systems~\cite{li11unbiased},  education~\cite{mandel14offline}, Web search~\cite{li15counterfactual},  advertising and marketing~\cite{bottou13counterfactual,chapelle14simple,tang13automatic,thomas17predictive}.  It can also be used as a key component for developing efficient off-policy policy optimization algorithms~\cite{dudik11doubly,jiang16doubly,li15counterfactual,thomas16data}.

Most state-of-the-art off-policy estimation methods are based on importance sampling (IS)~\cite[e.g.,][]{liu01monte}.
A major limitation, however, is that this approach can become inaccurate due to the high variance introduced by the importance weights, especially when the trajectory is long.  Indeed, most existing IS-based estimators compute the weight as the product of the importance ratios of many steps in the trajectory.  Variances in individual steps accumulate \emph{multiplicatively}, so that the overall IS weight of a random trajectory can have an exponentially high variance to result in an unreliable estimator.  In the extreme case when the trajectory length is infinite, as in infinite-horizon average-reward problems, some of these estimators are not even well-defined.  
Ad hoc approaches can be used, 
such as truncating the trajectories, 
but often lead to a hard-to-control bias in the final estimation.  Analogous to the well-known ``curse of  dimensionality'' in dynamic programming~\cite{bellman57dynamic}, we call this problem the ``curse of horizon'' in off-policy learning.


In this work, we develop a new approach that tackles the curse of horizon. 
The key idea is to apply importance sampling 
on the \emph{average visitation distribution} of single steps of state-action pairs,  instead of the much higher dimensional distribution of whole trajectories. 
This avoids the cumulative product across time in the density ratio, substantially decreasing its variance and
eliminating the estimator's dependence on the horizon. 


Our key challenge, of course, is to estimate the importance ratios of average  visitation distributions.  In practice, we often have access to both the target and behavior policies to compute their importance ratio of an action conditioned on a given state.  But we typically have \emph{no} access to transition probabilities of the environment, so estimating importance ratios of state visitation distributions has been very difficult, especially when only off-policy samples 
are available.
In this paper, we develop a mini-max loss function for estimating the true stationary density ratio, which yields a closed-form representation similar to maximum mean discrepancy~\cite{gretton2012kernel} when combined with a reproducing kernel Hilbert space (RKHS). We study the theoretical properties of our loss function, and demonstrate its empirical effectiveness 
on long-horizon problems. \lihong{Revisit this part (after paper is mostly finished).}

\section{Background}\label{sec:background}

\paragraph{Problem Definition} 
Consider a Markov decision process (MDP)~\cite{puterman94markov} $M=\langle\Sset, \Aset, r, \T \rangle$ with state space $\Sset$, action space $\Aset$, reward function $r$, and transition probability function $\T$. 
Assume the environment is initialized at state $s_0\in\Sset$, drawn from an unknown distribution $d_0(\cdot)$. 
At each time step $t$, an agent observes the current state $s_t$, takes an action $a_t$ 
according to a possibly stochastic policy $\pi(\cdot|s_t)$, receives a reward $r_t$ whose expectation is $r(s_t,a_t)$, and transitions to a next state $s_{t+1}$ according to transition probabilities $\T(\cdot | s_t,a_t)$. 
To simplify exposition and avoid unnecessary technicalities, we assume $\Sset$ and $\Aset$ are finite unless otherwise specified, although our method extends to continuous spaces straightforwardly, as demonstrated in experiments.  

We consider the \emph{infinite horizon} problem in which the MDP continues without termination. 
Let $p_\pi(\cdot)$ be the distribution of trajectory $\vtau = \{s_t, a_t, r_t\}_{t=0}^\infty$ under policy $\pi$. 
%
%
%
%
The expected reward of $\pi$ is
\vspace{-1mm}
\begin{align*} 
R_\pi \defeq \lim_{T\to \infty}  \E_{\vtau \sim p_\pi} [\R^T(\vtau)],  && 
\R^T(\vtau)\defeq (\sum_{t=0}^{\Tm} \gamma^t r_t  ) / (\sum_{t=0}^{\Tm}\gamma^t )\,,
\vspace{-1mm}
\end{align*}
\text{where} $R^T_\pi(\vtau)$ is the reward of trajectory $\vtau$ up to time $T$. 
Here, $\gamma \in (0,1]$ is a discount factor.  We distinguish two reward criteria, the average reward ($\gamma=1$) and discounted reward ($0<\gamma < 1$):
\begin{align*}
\text{\it Average:}~~~~~\R(\vtau) \defeq \lim_{T \to \infty} \frac{1}{\Tp}\sum_{t=0}^{\Tm} r_t, &&
\text{\it Discounted:}~~~~~\R(\vtau) \defeq (1-\gamma)\sum_{t=0}^\infty \gamma^t r_t\,. 
\end{align*}
where $(1-\gamma) = 1/\sum_{t=0}^\infty \gamma^t$ is a normalization factor. 
The problem of \emph{off-policy value estimation} is to estimate the expected reward $\R_\pi$ of a given \emph{target} policy $\pi$, 
when we only observe a set of trajectories $\vtau^i = \{s_t^i, a_t^i, r_t^i\}_{t=0}^{\Tm}$ generated by following 
a different \emph{behavior} policy $\pi_0$. 
 
\paragraph{Bellman Equation}
We briefly review the Bellman equation and the notation of value functions, 
for both average and discounted reward criteria. 
In the discounted case $(0<\gamma<1)$, the value $V^\pi(s)$ is the expected total discounted reward 
when the initial state $s_0$ is fixed to be $s$: $V^\pi(s) =\E_{\vtau\sim p_\pi}[\sum_{t=0}^\infty \gamma^t r_t ~|~s_0  = s]$.  Note that we \emph{do not} normalize 
$V^\pi$ by $(1-\gamma)$ in our notation. 
For the average reward ($\gamma=1$) case, the expected average reward does not depend on the initial state if the Markov process is ergodic~\cite{puterman94markov}.
Instead, the value function $V^\pi(s)$ 
in the average case 
 measures the \emph{average adjusted} sum of reward: $V^\pi(s) = \lim_{T\to\infty}\E_{\vtau\sim p_\pi}[\sum_{t=0}^{\Tm} (r_t - \R_\pi)|s_0=s]$. 
It represents the relative difference in total reward gained from starting in state $s_0=s$ as opposed to $R_\pi$. 

Under these definitions, $V^\pi$ is the fixed-point solution to the respective Bellman equations:
\begin{align}
 & \textit{Average:} && V^\pi(s) -  ~~\E_{s',a|s\sim d_\pi}[V^\pi(s')] = \E_{a|s\sim \pi} [r(s,a) - \R_\pi]\,, \label{equ:avgbell} \\
 & \textit{Discounted:} && V^\pi(s) - \gamma \E_{s',a|s\sim d_\pi}[V^\pi(s')] = \E_{a|s\sim \pi} [r(s,a)]\,. \label{equ:discbell}
\end{align}
 
\paragraph{Importance Sampling}  
IS represents a major class of approaches to off-policy estimation, 
which, in principle, only applies to the finite-horizon reward $R^T_\pi$ when the trajectory is truncated at a finite time step $T<\infty$.  
IS-based estimators are based on the following change-of-measure equality: 
\begin{align}\label{trjis}
\R^T_{\pi} =  \E_{\vtau \sim p_{\pi_0}} [w_{0:T}(\vtau) \R^T(\vtau)]\,, &&
\text{with} &&
w_{0:T}(\vtau) \defeq \frac{p_\pi(\vtau_{0:\Tm})}{p_{\pi_0}(\vtau_{0:\Tm}) } =  \prod_{t=0}^{\Tm} \betar(a_t | s_t)\,,
\end{align}
where 
$\betar(a|s) \defeq \pi(a|s) / \pi_0(a|s)$ is the single-step density ratio of policies $\pi$ and $\pi_0$ evaluated at a particular state-action pair $(s,a)$, and $w_{0:T}$ is the density ratio of the trajectory $\vtau$ up to time $T$. 
Methods based on \eqref{trjis} are called trajectory-wise IS, or weighted IS (WIS) when 
the weights are self-normalized~\cite{liu01monte,precup00eligibility}. 
It is possible to improve trajectory-wise IS with the so called step-wise, or per-decision, IS/WIS, which uses weight $w_{0:t}$ for reward $r_t$ at time $t$, yielding smaller variance~\cite{precup00eligibility}.
More details about these estimators are given in Appendix~\ref{app:is-wis}.

\paragraph{The Curse of Horizon}
The importance weight $w_{0:T}$ is a product of $T$ density ratios, whose variance can grow exponentially with $T$.  Thus, IS-based estimators have not been widely successful in long-horizon problems, let alone infinite-horizon ones where $w_{0:\infty}$ may not even be well-defined.  While WIS estimators often have reduced variance, the exponential dependence on horizon is unavoidable in general. 
We call this phenomenon in IS/WIS-based estimators the \emph{curse of horizon}.


%

\begin{wrapfigure}{r}{0.3\textwidth}
  \vspace{-20pt}
  \begin{center}
    \includegraphics[width=0.3\textwidth]{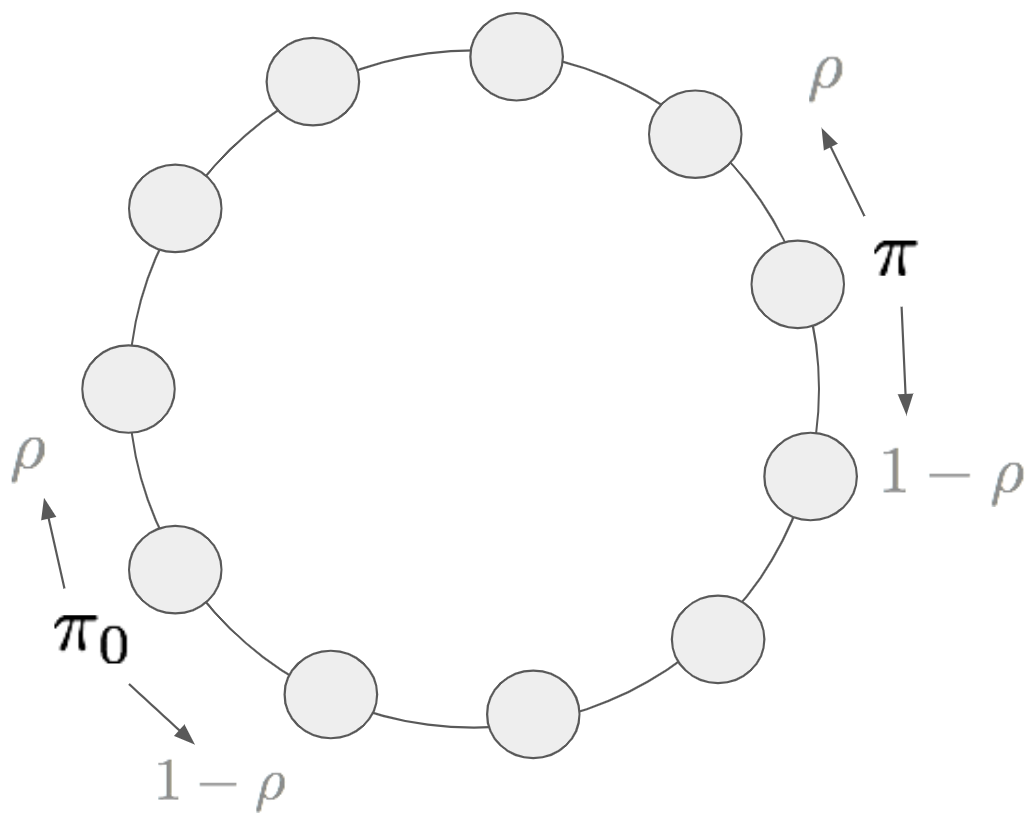} 
  \end{center}
  \vspace{-10pt}
\end{wrapfigure}
Not all hope is lost, however.  To see this, consider an MDP with $n$ states and $2$ actions, where states are arranged on a circle (see figure on the right).  The two actions deterministically move the agent from the current state to the neighboring state counterclockwise and clockwise, respectively.  Suppose we are given two policies with opposite effects: the behavior policy $\pi_0$ moves the agent clockwise with probability $\rho$, and the target policy $\pi$ moves the agent counterclockwise with probability $\rho$, for some constant $\rho\in(0,1)$.  As shown in Appendix~\ref{app:example}, IS and WIS  estimators suffer from exponentially large variance when estimating the average reward of $\pi$.  
However, 
a keen reader will realize that 
the two policies are symmetric, and thus their stationary state visitation distributions are identical. 
As we show in the sequel, this allows us to estimate the expected reward using a much more efficient importance sampling, whose importance weight equals the single-step density ratio $\betar(a_t|s_t)$, 
instead of the cumulative product weight $w_{0:T}$ in \eqref{trjis}, allowing us to significantly reduce the variance. 
Such an observation inspired the approach developed in this paper.

\section{Off-Policy Estimation via Stationary State Density Ratio Estimation}
As shown in the example above, significant decrease in estimation variance is possible when we apply importance weighting on the state space, rather than the trajectory space.
It eliminates the dependency on the trajectory length and is much more suited for long- or infinite-horizon problems. 
%
To realize this, we need to introduce 
an alternative representation of the expected reward. Denote by $d_{\pi,t}(\cdot)$ the distribution of state $s_t$ when we execute policy $\pi$ starting from an initial state $s_0$ drawn from an initial distribution $d_0(\cdot)$. 
We define the average visitation distribution to be 
\begin{align}\label{dpi}
d_\pi(s) = \lim_{T\to\infty}\left (\sum_{t=0}^{\Tm} \gamma^t d_{\pi,t}(s) \right) / \left (\sum_{t=0}^{\Tm} \gamma^t \right).  
\end{align}
We always assume the limit $T\to\infty$ exists in this work. 
When $\gamma \in (0,1)$ in the discounted case, $d_{\pi}$ is a discounted average of $d_{\pi,t}$, that is, $d_\pi(s) = (1-\gamma)\sum_{t=0}^\infty \gamma^t d_{\pi,t}(s)$ ;
when $\gamma=1$ in the average reward case, $d_\pi$ is the stationary distribution of $s_t$ as $t\to\infty$ under policy $\pi$, that is, 
$d_\pi(s) = \lim_{T\to\infty}\frac{1}{\Tp}\sum_{t=0}^{\Tm} d_{\pi,t}(s) =\lim_{t\to\infty}d_{\pi,t}(s)$.  

Following Definition~\ref{dpi}, it can be verified that $R_\pi$ can be expressed alternatively as
\begin{align}\label{eq:rrr}
\R_\pi  = \sum_{s,a} d_\pi(s) \pi(a|s) r(s,a)  
 = \E_{(s,a)\sim d_\pi}[r(s,a)],
\end{align}
where, abusing notation slightly, we use $(s,a)\sim d_\pi$ to denote draws from distribution $d_\pi(s,a) := d_\pi(s) \pi(a|s)$.  
Our idea is to construct an IS estimator based on \eqref{eq:rrr}, where the importance ratio is computed on state-action pairs rather than on trajectories:
\myempty{
\begin{align}\label{eq:rpidd}
\R_\pi 
 = \E_{(s,a) \sim d_{\pi_0}} \left [ \rho(s,a) r(s,a)\right ]  
 &&
 \hat \R_\pi 
 = \sum_{i=1}^m \sum_{t=1}^T\frac{1}{z}\left [ \rho(s_i,a_i) r_t^i\right ],  
 &&
 \rho(s,a) := \frac{d_\pi(s)}{d_{\pi_0}(s)} \frac{\pi(a|s)}{\pi_0(a|s)} 
  \end{align}
  }
\begin{align}\label{eq:rpidd}
\R_\pi 
 = \E_{(s,a) \sim d_{\pi_0}} \left [
 \rnd(s) \betar(a,s) r(s,a) \right ], 
  \end{align} 
 where 
 $\betar(a,s) = \pi(a|s)/\pi_0(a|s)$ and 
 $w_{\pi/\pi_0}(s) \defeq d_\pi(s) / d_{\pi_0}(s)$ is the density ratio of the visitation distributions $d_\pi$ and $d_{\pi_0}$; 
 here, 
 $w_{\pi/\pi_0}(s)$ is not known directly but can be estimated, as shown later.
 Eq~\ref{eq:rrr} allows us to construct a (weighted-)IS estimator by approximating 
 $\E_{(s,a)\sim d_{\pi_0}}[\cdot]$ with data $\{s_t^i,a_t^i, r_t^i\}_{i=1}^m$ obtained when running policy $\pi_0$, 
 \vspace{-1mm}
  \begin{align}
 \hat \R_\pi = \sum_{i=1}^m \sum_{t=0}^{\Tm} w_t^i r_t^i, 
 &&
 \text{where} 
 &&
 w_t^i :=  
 \frac{\gamma^t \rnd(s_t^i)\betar(a_t^i|s_t^i)}{\sum_{t',i'} \gamma^{t'}\rnd(s_{t'}^{i'})\betar(a_{t'}^{i'}|s_{t'}^{i'})}\,.
   \label{eq:rpiddwis}
   \end{align}
This IS estimator
works in the space of $(s,a)$, instead of trajectoris $\vtau=\{s_t, a_t\}_{t=0}^{\Tm}$,  %
leading to a potentially significant variance reduction. 
Returning to the example in Section~\ref{sec:background} (see also Appendix~\ref{app:example}), 
since the two policies are symmetric and lead to the same stationary distributions, that is, $\rnd(s)=1$, 
the importance weight in \eqref{eq:rpidd} is simply ${\pi(a|s)}/{\pi_0(a|s)}$, independent of the trajectory length. 
This avoids the excessive variance in long horizon problems. 
In Appendix~\ref{app:is-wis}, we 
provide a further discussion, showing that our estimator can be viewed as a type of \emph{Rao-Backwellization} of the trajectory-wise and step-wise estimators. 

\subsection{Average Reward Case} 
\label{sec:avg}
The key technical challenge remaining is estimating the density ratio $\rnd(s)$, which we address in this section. 
For simplifying the presentation, we start with estimating $d_{\pi}(s)$ for the average reward case and discuss the discounted case in Section~\ref{sec:disc}.  

Let $\T_\pi(s'|s) \defeq \sum_{a} \T(s'|s,a) \pi(a|s)$ be the transition probability from $s$ to $s'$ following policy $\pi$. 
In the average reward case, $d_\pi$ equals the stationary distribution of $\T_\pi$, satisfying 
\begin{align}\label{dpisss}
 d_\pi(s') = \sum_{s} \T_\pi(s'|s) d_\pi(s), ~~~ \forall s'. 
\end{align}
Assume the Markov chain of $\T_\pi$ is finite state and ergodic, $d_\pi$ is also the unique distribution that satisfies \eqref{dpisss}. This simple fact can be leveraged to derive the following key property of $\rnd(s)$.
\begin{thm}\label{thm:one}
In the average reward case ($\gamma = 1$), 
assume $d_\pi$ is the unique invariant distribution of $\T_\pi$ and $d_{\pi_0}(s) >0$, $\forall s$. Then a function $w(s)$ equals $\rnd(s):=d_\pi(s)/d_{\pi_0}(s)$ (up to a constant factor) if and only if it satisfies 
\vspace{-1mm}
\begin{align}\label{eq:wes}
\begin{split} 
& \E_{ (s,a) |s' \sim d_{\pi_0}}  [\Delta(w; s,a,s') ~ |~ s'  ] = 0, ~~~~ \forall ~ s', \\
& \text{with~~~~~~~~} \Delta(w; s,a,s') \defeq w(s)\betar(a|s) - w(s'), 
\end{split}
\end{align}
where $\betar(a|s) = \pi(a|s)/\pi_0(a|s)$ and 
 $(s,a) | s' \sim d_{\pi_0}$ denote the conditional distribution $d_{\pi_0}(s,a|s')$ related to joint distribution  $d_{\pi_0}(s,a,s') \defeq d_{\pi_0}(s) \pi_0(a|s) \T(s' | s,a)$. 
Note that this is a \emph{time-reserved} conditional probability, since it is the conditional distribution of $(s,a)$ given that their next state is $s'$  following policy $\pi_0$. 
\end{thm}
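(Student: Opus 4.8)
The plan is to unfold the conditional expectation in \eqref{eq:wes} into an ordinary linear equation, recognize it as the invariance equation \eqref{dpisss} for a suitably reweighted measure, and then invoke the uniqueness of the invariant distribution of $\T_\pi$.

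First I would expand the time-reversed conditional via Bayes' rule: $d_{\pi_0}(s,a\mid s') = d_{\pi_0}(s)\,\pi_0(a|s)\,\T(s'|s,a)/d_{\pi_0}(s')$, noting that the $s'$-marginal of the joint $d_{\pi_0}(s,a,s')$ is $\sum_{s,a} d_{\pi_0}(s)\pi_0(a|s)\T(s'|s,a) = \sum_s \T_{\pi_0}(s'|s)d_{\pi_0}(s) = d_{\pi_0}(s')$, because $d_{\pi_0}$ is the stationary distribution of $\T_{\pi_0}$; this also shows the conditioning is well defined (we use $d_{\pi_0}(s)>0$). Multiplying \eqref{eq:wes} through by $d_{\pi_0}(s')$, it becomes, for every $s'$,
\begin{align*}
\sum_{s,a} d_{\pi_0}(s)\,\pi_0(a|s)\,\T(s'|s,a)\,w(s)\,\betar(a|s)
\;=\; w(s')\sum_{s,a} d_{\pi_0}(s)\,\pi_0(a|s)\,\T(s'|s,a).
\end{align*}
The right-hand side equals $d_{\pi_0}(s')\,w(s')$ by the marginalization identity above. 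On the left I would use $\pi_0(a|s)\,\betar(a|s) = \pi(a|s)$ and then sum out $a$ through $\sum_a \pi(a|s)\T(s'|s,a) = \T_\pi(s'|s)$, obtaining $\sum_s \T_\pi(s'|s)\,d_{\pi_0}(s)\,w(s)$. Writing $\mu(s)\defeq d_{\pi_0}(s)\,w(s)$, condition \eqref{eq:wes} is thus equivalent to
\begin{align*}
\mu(s') = \sum_s \T_\pi(s'|s)\,\mu(s),\qquad \forall s',
\end{align*}
i.e. $\mu$ is an invariant vector of $\T_\pi$ — precisely \eqref{dpisss} with $d_\pi$ replaced by $\mu$.

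To finish, I would invoke that the (finite, ergodic) transition matrix $\T_\pi$ has a one-dimensional space of invariant vectors (Perron–Frobenius; equivalently, any invariant vector not proportional to $d_\pi$ could be added in a small multiple to $d_\pi$ and renormalized, contradicting uniqueness of the invariant \emph{distribution}). Hence $\mu = c\,d_\pi$ for some constant $c$, so dividing by $d_{\pi_0}(s)>0$ yields $w(s) = c\,d_\pi(s)/d_{\pi_0}(s) = c\,\rnd(s)$. The converse is immediate: if $w = c\,\rnd$ then $\mu = c\,d_\pi$ solves \eqref{dpisss}, and running the above computation in reverse recovers \eqref{eq:wes}. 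The step I expect to require the most care is this last uniqueness argument — promoting ``unique invariant probability distribution'' to ``unique invariant vector up to scale'' — which is exactly where finiteness and ergodicity of $\T_\pi$ enter; the rest is bookkeeping with the change of measure.
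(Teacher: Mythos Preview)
Your proposal is correct and follows essentially the same route as the paper: expand the time-reversed conditional via Bayes' rule, set $\mu(s)=d_{\pi_0}(s)w(s)$ (the paper calls it $g$), reduce \eqref{eq:wes} to the invariance equation $\mu(s')=\sum_s \T_\pi(s'|s)\mu(s)$, and conclude $\mu\propto d_\pi$ by uniqueness. If anything, you are slightly more careful than the paper in justifying that the $s'$-marginal of $d_{\pi_0}(s,a,s')$ equals $d_{\pi_0}(s')$ and in distinguishing ``unique invariant distribution'' from ``one-dimensional invariant eigenspace.''
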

Because the conditional distribution is time reversed, it is difficult to directly estimate the conditional expectation $\E_{(s,a)|s'}[\cdot]$ for a given $s'$.  
This is because we usually can observe only a single data point from $d_{\pi_0}(s,a | s')$ of a fixed $s'$, given that it is difficult to see by chance two different $(s,a)$ pairs transit to the same $s'$. 
\myempty{
This causes a biased gradient problem if we directly minimizes the L2 error of \eqref{wes} in order to estimate $w$. 
To be more specific, denoting by $\Delta(w, s')$ the LHS of Eq~\ref{wes}, 
a naive approach to estimating $w$ would be to minimize the mean squred loss
$L(w) \defeq \E_{s'\sim d_{\pi_0}} [\Delta(w,s')^2]$.
Unfortunately, it is hard to construct unbiased estimator of $L(w)$ (and its gradient), 
because of the difficulty of estimating the conditional expectation that appears inside the square function. 
This problem is also known as the \emph{double-sample issue} in the RL literature~\cite{baird95residual}.
}
%
%
This problem can be addressed by introducing a discriminator function and constructing a mini-max loss function. 
Specifically, multiplying \eqref{eq:wes} with a function $f(s')$ and averaging under $s'\sim d_{\pi_0}$ gives 
\begin{align}\label{lwf}
\begin{split}
     L(w,f)
    & \defeq \E_{ (s,a, s') \sim d_{\pi_0}} \left [\Delta(w; s,a,s') f(s') \right ] \\
    &~ = \E_{ (s,a, s') \sim d_{\pi_0}} \left [\left ( w(s)\betar(a|s) - w(s')\right) f(s') \right ]
    . \\
\end{split}
\end{align}
Following Theorem~\ref{thm:one}, we have $w\propto \rnd$ if and only if $L(w,f)=0$ for any function $f$.  
This motivates us to estimate $\rnd$ with a mini-max problem: 
\begin{align}\label{equ:Lwf}
\min_{w} \big \{ D(w) \defeq \max_{f \in \F} L\left ({w}/z_{w},~ f \right )^2  \big \},
\end{align}
where $\F$ is a set of discriminator functions and $z_w \defeq \E_{s\sim d_{\pi_0}}[w(s)]$ normalizes $w$ to avoid the trivial solution $w\equiv0$. 
%
We shall assume $\F$ to be rich enough following the conditions to be discussed in Section~\ref{sec:theory}. 
A promising choice of a rich function class is neural networks, for which the mini-max problem \eqref{equ:Lwf} can be solved numerically in a fashion similar to generative adversarial networks (GANs)~\citep{goodfellow2014generative}.
Alternatively, we can take $\F$ to be a ball of a reproducing kernel Hilbert space (RKHS), which enables a closed form representation of $D(w)$ as we show in the following. 

\myempty{
We start with a brief introduction of RKHS. A symmetric function $k(s,s')$ is called positive definite if all matrices of form $[k(s_i, s_j)]_{ij}$ are  positive definite for any $\{s_i\}\subseteq\mathcal S $.  
Related to every positive definite kernel $k(s,s')$ is an unique RKHS $\mathcal H$ which is the closure of functions of form $f(s) = \sum_{i} a_i k(s,s_i)$, $\forall a_i \in \RR, ~ s_i \in \mathcal S$, equipped with 
inner product $\la f, g\ra_{\mathcal H} =  \sum_{ij} a_i k(s_i,s_j) b_j$ for $g(x) = \sum_i b_i k(s, s_i)$. 
A key property of RKHS  is the so called reproducing property, which says $f(x) = \la f(\cdot), ~ k(x,\cdot) \ra_{\mathcal H}$.  \todo{Move to appendix?}
}
\begin{thm}\label{thm:rkhs} 
Assume $\mathcal{H}$ is a RKHS of functions $f(s)$ with a positive definite kernel $k(s,\bar s)$, and define $\F:=\{f\in \mathcal H \colon ||f||_{\mathcal H}\leq 1\}$ to be the unit ball of $\mathcal H$. 
We have  
\begin{align}\label{mmd}
\begin{split}
\max_{f \in \F} L(w,f)^2 &= 
\E_{d_{\pi_0}} \left [ \Delta(w; ~s,a,s') \Delta(w; ~\bar s,\bar a,\bar s')  k(s', \bar s') \right ], 
\end{split}
\end{align}
where $(s,a,s')$ and $(\bar s, \bar a, \bar s')$ are independent transition pairs obtained when running policy $\pi_0$, and $\Delta(w; s,a,s')$ is defined in \eqref{lwf}. See Appendix~\ref{proofs} for more background on RKHS. 
\end{thm}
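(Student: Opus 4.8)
The plan is to read \eqref{mmd} as the familiar maximum-mean-discrepancy computation: for fixed $w$, the map $f\mapsto L(w,f)$ is a bounded linear functional on $\mathcal H$, so its supremum over the unit ball equals the RKHS norm of its Riesz representer, and squaring that norm yields the stated double-expectation kernel formula. Concretely, abbreviate $\Delta\defeq\Delta(w;s,a,s')$ as in \eqref{lwf}. By the reproducing property $f(s')=\langle f,\,k(s',\cdot)\rangle_{\mathcal H}$, so
\begin{align*}
L(w,f)=\E_{d_{\pi_0}}\!\big[\Delta\,\langle f,\,k(s',\cdot)\rangle_{\mathcal H}\big]=\Big\langle f,\ \E_{d_{\pi_0}}\!\big[\Delta\,k(s',\cdot)\big]\Big\rangle_{\mathcal H}.
\end{align*}
The step that pulls the expectation inside the inner product is justified by noting $\E_{d_{\pi_0}}\!\big[\|\Delta\,k(s',\cdot)\|_{\mathcal H}\big]=\E_{d_{\pi_0}}\!\big[|\Delta|\sqrt{k(s',s')}\big]<\infty$, which is immediate under the paper's standing assumption that $\Sset$ and $\Aset$ are finite (and a mild moment condition otherwise); hence $g_w\defeq\E_{d_{\pi_0}}[\Delta\,k(s',\cdot)]$ is a well-defined Bochner integral in $\mathcal H$ and $L(w,\cdot)=\langle\,\cdot\,,g_w\rangle_{\mathcal H}$ is a bounded linear functional.

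Next I would apply Cauchy--Schwarz: $\max_{\|f\|_{\mathcal H}\le1}\langle f,g_w\rangle_{\mathcal H}=\|g_w\|_{\mathcal H}$, attained at $f^\star=g_w/\|g_w\|_{\mathcal H}$ when $g_w\ne0$ (and the maximum is $0$ when $g_w=0$, in which case the formula also holds trivially). Therefore $\max_{f\in\F}L(w,f)^2=\|g_w\|_{\mathcal H}^2=\langle g_w,g_w\rangle_{\mathcal H}$. Finally, introduce two i.i.d.\ copies $(s,a,s')$ and $(\bar s,\bar a,\bar s')$ of the transition triple under $\pi_0$; by bilinearity of the Bochner integral and the reproducing identity $\langle k(s',\cdot),k(\bar s',\cdot)\rangle_{\mathcal H}=k(s',\bar s')$,
\begin{align*}
\langle g_w,g_w\rangle_{\mathcal H}=\E_{d_{\pi_0}}\!\big[\Delta(w;s,a,s')\,\Delta(w;\bar s,\bar a,\bar s')\,k(s',\bar s')\big],
\end{align*}
which is exactly \eqref{mmd}.

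The only genuinely delicate point is the exchange of expectation and inner product in the first display; the rest is Riesz representation plus bookkeeping. Under the finite-state assumption this interchange is automatic, so in the setting of the paper the main obstacle essentially disappears, and for continuous spaces it is handled by the stated integrability of $|\Delta|\sqrt{k(s',s')}$, which also guarantees that all the quantities appearing in \eqref{mmd} are finite.
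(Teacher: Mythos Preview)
Your proof is correct and follows essentially the same route as the paper: write $L(w,f)=\langle f,\phi^*\rangle_{\mathcal H}$ with $\phi^*=\E_{d_{\pi_0}}[\Delta\,k(s',\cdot)]$ via the reproducing property, then use $\max_{\|f\|_{\mathcal H}\le 1}\langle f,\phi^*\rangle_{\mathcal H}^2=\|\phi^*\|_{\mathcal H}^2$ and expand the squared norm as the double expectation. The only difference is that you are more explicit about the Bochner-integrability condition needed to exchange expectation and inner product, which the paper leaves implicit.
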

In practice, we approximate the expectation in \eqref{mmd} using discounted empirical distribution of the transition pairs, 
yielding consistent estimates following standard results on V-statistics \citep{serfling2009approximation}.   

\myempty{
\begin{algorithm}[t] 
\caption{Off-Policy Estimation with Density Ratio Estimation} 
\label{alg1} 
\begin{algorithmic} 
    \STATE Inputs: behavior policy $\pi_0$ and target policy $\pi$. Transition pairs $\{s_j, a_j, r_j, s_j'\}_{j=0}^m$ drawn from $d_{\pi_0}$  
    \STATE Output: Estimator of $\R_\pi$. 
    \STATE Step 1: Estimate density ratio by solving 
    $$
    \min_{w} \max_{f\in \F}(\sum_{j=1}^m \Delta(w; s_j, a_j, s_j') f(s_j'))^2/\norm{w}^2, 
    $$
    When $\F$ is RKHS ball, the closed form in Theorem~\ref{thm:rkhs} should be used. 
    \STATE Step 2: Estimate the reward: 
    $$
    \hat R_\pi = \sum_{t=1}^m w(s_j) \betar(a_j|s_j)r(s_j, a_j). 
    $$
\end{algorithmic}
\end{algorithm}
}

\subsection{Discounted Reward Case} \label{sec:disc}
We now discuss the extension to the discount case of $\gamma \in (0,1)$.    
Similar to the average reward case, we start with a recursive equation that characterizes $d_\pi(s)$ in the discounted case. 
\begin{lem}\label{lem:fixdpidisc} 
Following the definition of $d_\pi$ in \eqref{dpi}, for any $\gamma \in (0,1]$, we have 
\begin{align}\label{fixdpidisc}
 \gamma \sum_s \T_\pi(s'|s) d_\pi(s) - d_\pi(s') +  (1-\gamma) d_{0}(s') = 0, ~~~~~ \forall s'.
\end{align}
Denote by $(s,a,s')\sim d_\pi$ draws from $d_\pi(s)\pi(a|s)\T(s'|s,a)$. 
For any function $f$, we have 
\begin{align}\label{ggd}
    \E_{(s,a,s')\sim d_\pi}[\gamma f(s') - f(s)] + (1-\gamma)\E_{s\sim d_0}[f(s)] = 0. 
\end{align}
\end{lem}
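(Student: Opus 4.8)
The plan is to derive the balance equation \eqref{fixdpidisc} directly from the definition of $d_\pi$ in \eqref{dpi} using the one-step recursion of the state marginals, and then obtain \eqref{ggd} at once by testing \eqref{fixdpidisc} against an arbitrary function $f$.

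First I would record the elementary recursion for the state marginals: $d_{\pi,0}=d_0$ and $d_{\pi,t+1}(s')=\sum_s \T_\pi(s'|s)\,d_{\pi,t}(s)$ for all $t\ge 0$, which follows from the definition of $d_{\pi,t}$ together with $\T_\pi(s'|s)=\sum_a \T(s'|s,a)\pi(a|s)$. Writing $Z_T\defeq\sum_{t=0}^{T}\gamma^t$, I multiply the finite-$T$ approximation $Z_T^{-1}\sum_{t=0}^{T}\gamma^t d_{\pi,t}(s')$ by $Z_T$, split off the $t=0$ term, substitute the recursion into the remaining terms, and re-index to get
\begin{align*}
\sum_{t=0}^{T}\gamma^t d_{\pi,t}(s') \;=\; d_0(s') \;+\; \gamma \sum_s \T_\pi(s'|s)\sum_{t=0}^{T-1}\gamma^t d_{\pi,t}(s)\,.
\end{align*}
Dividing by $Z_T$ and letting $T\to\infty$ yields \eqref{fixdpidisc}: the term $d_0(s')/Z_T$ converges to $(1-\gamma)d_0(s')$ in both regimes (to $0$ when $\gamma=1$, since $Z_T\to\infty$, and to $(1-\gamma)d_0(s')$ when $\gamma\in(0,1)$, since $Z_T\to(1-\gamma)^{-1}$), while on the right $Z_{T-1}/Z_T\to 1$ and $Z_{T-1}^{-1}\sum_{t=0}^{T-1}\gamma^t d_{\pi,t}(s)\to d_\pi(s)$ by the assumed existence of the limit in \eqref{dpi}; since $\Sset$ is finite the sum over $s$ commutes with the limit. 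In the average case ($\gamma=1$) this recovers the stationarity relation \eqref{dpisss}, and in the discounted case it is consistent with the closed form $d_\pi=(1-\gamma)\sum_t\gamma^t d_{\pi,t}$.

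For the second claim I would multiply \eqref{fixdpidisc} by an arbitrary $f(s')$ and sum over $s'$. The middle term gives $\sum_{s'}f(s')d_\pi(s')=\E_{(s,a,s')\sim d_\pi}[f(s)]$ after relabeling, since marginalizing $d_\pi(s)\pi(a|s)\T(s'|s,a)$ over $(a,s')$ returns $d_\pi(s)$; the first term becomes $\gamma\sum_{s,s'}f(s')\T_\pi(s'|s)d_\pi(s)=\gamma\sum_{s,a,s'}f(s')\T(s'|s,a)\pi(a|s)d_\pi(s)=\gamma\,\E_{(s,a,s')\sim d_\pi}[f(s')]$; and the last term is $(1-\gamma)\E_{s\sim d_0}[f(s)]$. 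Collecting the three pieces gives exactly \eqref{ggd}.

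The only genuinely delicate point is the limiting argument for \eqref{fixdpidisc}: one has to handle the normalization $Z_T$ uniformly so that a single statement covers both $\gamma\in(0,1)$ and $\gamma=1$, and one must appeal to the finiteness of $\Sset$ (or a dominated-convergence argument in the continuous extension) to exchange the limit with the sum over $s$. The rest is routine bookkeeping.
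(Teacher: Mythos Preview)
Your argument is correct and follows essentially the same split-off-$t=0$-and-reindex manipulation as the paper's proof. The only difference is that you carry the normalization $Z_T$ through a finite-$T$ identity and pass to the limit, which lets you cover $\gamma=1$ in the same stroke, whereas the paper works directly with the closed form $d_\pi=(1-\gamma)\sum_{t\ge 0}\gamma^t d_{\pi,t}$ and so handles only $\gamma\in(0,1)$ explicitly.
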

One may view $d_\pi$ as the invariant distribution of an \emph{induced} Markov chain with transition probability of $(1-\gamma) d_{0}(s') + \gamma \T_\pi(s'|s)$, which follows $\T_\pi$ with probability $\gamma$, and restarts from initial distribution $d_0(s')$ with probability $1-\gamma$. We can show that $d_\pi$ exists and is unique under mild conditions \cite{puterman94markov}. 

\begin{thm}
\label{thm:discount}
Assume $d_\pi$ is the unique solution of \eqref{fixdpidisc}, and $d_{\pi_0}(s) >0$, $\forall s$. Define 
\begin{align}\label{lwflambda}
    L(w,f) =\gamma \E_{(s,a,s')\sim d_{\pi_0}}  [ \Delta(w; s,a,s') f(s') ] +  (1-\gamma)\E_{s\sim d_0}[(1- w(s)) f(s)].
\end{align}
Assume $0<\gamma<1$, then $w(s) = w_{\pi/\pi_0}(s)$ if and only if $L(w,f) =0$ for any test function $f$. 
\end{thm}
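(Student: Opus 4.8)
The plan is to mirror the argument structure of Theorem~\ref{thm:one}, now built on the recursive identity~\eqref{fixdpidisc} of Lemma~\ref{lem:fixdpidisc} rather than the stationarity equation~\eqref{dpisss}. First I would establish the ``only if'' direction: assuming $w(s) = w_{\pi/\pi_0}(s) = d_\pi(s)/d_{\pi_0}(s)$, I substitute into $L(w,f)$ and show it vanishes for every test function $f$. The key reweighting step is that for any $g$,
\begin{align*}
\E_{(s,a,s')\sim d_{\pi_0}}\!\big[w_{\pi/\pi_0}(s)\betar(a|s)\, g(s,a,s')\big] = \E_{(s,a,s')\sim d_\pi}\big[g(s,a,s')\big],
\end{align*}
since $d_{\pi_0}(s)\pi_0(a|s)\T(s'|s,a)\cdot \tfrac{d_\pi(s)}{d_{\pi_0}(s)}\tfrac{\pi(a|s)}{\pi_0(a|s)} = d_\pi(s)\pi(a|s)\T(s'|s,a)$. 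Applying this with $g(s,a,s') = f(s')$ converts the first term of~\eqref{lwflambda} into $\gamma\,\E_{(s,a,s')\sim d_\pi}[f(s')] - \gamma\,\E_{(s,a,s')\sim d_{\pi_0}}[w_{\pi/\pi_0}(s')f(s')]$, and the latter inner expectation reduces to $\E_{s'\sim d_\pi}[f(s')]$ after marginalizing. Plugging the resulting expression, together with $\E_{s\sim d_0}[(1-w(s))f(s)] = \E_{s\sim d_0}[f(s)] - \E_{s\sim d_\pi}[\tfrac{d_0(s)}{d_{\pi_0}(s)}w_{\pi/\pi_0}(s)\cdot\ldots]$ — more cleanly, just recognizing $w_{\pi/\pi_0}(s)d_{\pi_0}(s) = d_\pi(s)$ — into~\eqref{lwflambda} and matching terms against~\eqref{ggd} from Lemma~\ref{lem:fixdpidisc} shows $L(w,f)=0$.

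For the ``if'' direction, suppose $L(w,f) = 0$ for all $f$. Expanding~\eqref{lwflambda} and using the identity $\E_{(s,a,s')\sim d_{\pi_0}}[w(s)\betar(a|s)f(s')] = \E_{s'}\big[f(s')\sum_s \widetilde w(s)\T_\pi(s'|s)\big]$ where $\widetilde w(s) := w(s)d_{\pi_0}(s)$ (again by absorbing $\pi_0(a|s)\betar(a|s) = \pi(a|s)$ and summing over $a$), and $\E_{(s,a,s')\sim d_{\pi_0}}[w(s')f(s')] = \E_{s'}[w(s')d_{\pi_0}(s')f(s')] = \E_{s'}[\widetilde w(s')f(s')]$, I get
\begin{align*}
\sum_{s'} f(s')\Big(\gamma\sum_s \T_\pi(s'|s)\widetilde w(s) - \widetilde w(s') + (1-\gamma)d_0(s')\Big) = 0 \quad\text{for all } f.
\end{align*}
Since $f$ ranges over all functions on the finite state space, the coefficient of each $f(s')$ must be zero, so $\widetilde w$ satisfies exactly the fixed-point equation~\eqref{fixdpidisc}. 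By the uniqueness hypothesis, $\widetilde w = d_\pi$, i.e. $w(s)d_{\pi_0}(s) = d_\pi(s)$, hence $w = w_{\pi/\pi_0}$. Note that, unlike the average-reward case, there is no free multiplicative constant here: the inhomogeneous term $(1-\gamma)d_0(s')$ pins down the normalization, which is why the statement reads $w = w_{\pi/\pi_0}$ rather than ``up to a constant.''

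The main obstacle I anticipate is bookkeeping the change-of-measure between $d_{\pi_0}$-expectations over triples $(s,a,s')$ and the marginal/transition quantities $d_\pi$, $\T_\pi$, $d_0$ — in particular being careful that the $(1-\gamma)d_0$ term interacts correctly and that $z_w$-type normalization is genuinely unnecessary. I would also want to state explicitly the finiteness/richness assumption on the test-function class needed to pass from ``$\sum_{s'} f(s')(\cdots) = 0$ for all $f$'' to ``$(\cdots) = 0$ pointwise''; on a finite state space the indicator functions $f = \mathds{1}_{s'}$ suffice, matching the setup fixed in Section~\ref{sec:background}. The rest is routine substitution, so I would keep the written proof short and defer the elementary reweighting lemma to a display early in the argument.
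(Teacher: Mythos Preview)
Your overall plan is right, and your target identity
\[
L(w,f)\;=\;\sum_{s'} f(s')\Big(\gamma\sum_s \T_\pi(s'|s)\widetilde w(s)\;-\;\widetilde w(s')\;+\;(1-\gamma)d_0(s')\Big)
\]
with $\widetilde w(s)=w(s)d_{\pi_0}(s)$ is exactly what is needed. But your derivation of it contains a genuine error: you write
\[
\E_{(s,a,s')\sim d_{\pi_0}}[w(s')f(s')] \;=\; \E_{s'}[w(s')d_{\pi_0}(s')f(s')] \;=\; \E_{s'}[\widetilde w(s')f(s')],
\]
which tacitly assumes that the $s'$-marginal of $d_{\pi_0}(s)\pi_0(a|s)\T(s'|s,a)$ is $d_{\pi_0}(s')$. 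That holds when $\gamma=1$ (stationarity), but for $0<\gamma<1$ the $s'$-marginal is $\sum_s d_{\pi_0}(s)\T_{\pi_0}(s'|s)=(d_{\pi_0}(s')-(1-\gamma)d_0(s'))/\gamma$, as Lemma~\ref{lem:fixdpidisc} (applied to $\pi_0$) shows. If you trace your own steps with this wrong marginal, you obtain $-\gamma\widetilde w(s')-(1-\gamma)d_0(s')w(s')$ in the bracket rather than $-\widetilde w(s')$, and the equation no longer matches~\eqref{fixdpidisc}. The same issue bites your ``only if'' direction where you marginalize $w_{\pi/\pi_0}(s')f(s')$.

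The fix is short: invoke~\eqref{ggd} with $d_{\pi_0}$ and test function $w\cdot f$ to get
\[
\gamma\,\E_{(s,a,s')\sim d_{\pi_0}}[w(s')f(s')] \;=\; \E_{s\sim d_{\pi_0}}[w(s)f(s)] \;-\; (1-\gamma)\,\E_{s\sim d_0}[w(s)f(s)].
\]
Substituting this into $L(w,f)$ makes the $-(1-\gamma)\E_{s\sim d_0}[w(s)f(s)]$ piece cancel against the corresponding part of $(1-\gamma)\E_{s\sim d_0}[(1-w(s))f(s)]$, and what remains is precisely your displayed identity. This is the route the paper takes. Equivalently, you can keep your direct computation but replace the wrong marginal by the correct one from~\eqref{fixdpidisc} for $\pi_0$ and observe the same cancellation. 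Your remark about the $(1-\gamma)d_0$ inhomogeneity pinning down the normalization is correct; you just need the right bookkeeping step to get there.
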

When $\gamma = 1$, the definition in \eqref{lwflambda} reduces to the average reward case in \eqref{lwf}.  A subtle difference is that $L(w,f)=0$ only ensures $w\propto \ws$ when $\gamma=1$, while $w=\ws$ when $\gamma\in(0,1)$. This is because the additional term $\E_{s\sim d_0}[(1- w(s)) f(s)]$ in \eqref{lwflambda} forces $w$ to be normalized properly. 
In practice, however, we still find it works better to pre-normalize $w$ to $\tilde w = w/\E_{d_{\pi_0}}[w]$, and optimize the objective $L(\tilde w, ~ f)$. 

\subsection{Further Theoretical Analysis}
\label{sec:theory}
\lihong{More specific section title, like ``Error Analysis''?}
In this section, we develop further theoretical understanding on the loss function $L(w,f)$. 
Lemma~\ref{lem:lwf} below reveals an interesting connection between $L(w,f)$ and the Bellman equation, 
allowing us to bound the estimation error of density ratio and  expected  reward with the mini-max loss when the discriminator space $\F$ is chosen properly  (Theorems~\ref{wsmax} and \ref{bund}).  
The results in this section apply to both discounted and average reward cases. 
\renewcommand{\ws}{\rnd}

\begin{lem}\label{lem:lwf}
Given $L(w,f)$ in \eqref{lwflambda}, and assuming $\E_{d_{\pi_0}}[w] = 1$ in the average reward case, we have  
%
\begin{align}
& L(w,f) = \E_{s\sim \dss_{\pi_0}} [(\ws(s) - w(s)) \Pi f(s)]\,, \label{eq:wsdgbc} \\
& \text{where \qquad}
\Pi f(s) \defeq  
f(s) - \gamma \E_{(s', a) | s \sim d_\pi} [ f(s') ] \,. \label{equ:pif}
\end{align}
Note that $\Pi f$ equals the left hand side of the Bellman equations \eqref{equ:avgbell} and \eqref{equ:discbell}, when $f = V^\pi$. 
\end{lem}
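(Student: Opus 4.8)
The plan is to prove identity \eqref{eq:wsdgbc} by a direct algebraic manipulation of the definition \eqref{lwflambda}, and the remark about $\Pi f$ then follows immediately by comparing \eqref{equ:pif} with the Bellman equations \eqref{equ:avgbell} and \eqref{equ:discbell}.

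First I would expand the definition
\[
L(w,f) = \gamma \E_{(s,a,s')\sim d_{\pi_0}} \bigl[ (w(s)\betar(a|s) - w(s')) f(s') \bigr] + (1-\gamma)\E_{s\sim d_0}\bigl[(1-w(s)) f(s)\bigr].
\]
The key is to rewrite the $w(s)\betar(a|s)$ term. Since $d_{\pi_0}(s,a,s') = d_{\pi_0}(s)\pi_0(a|s)\T(s'|s,a)$ and $\betar(a|s) = \pi(a|s)/\pi_0(a|s)$, we get $d_{\pi_0}(s)\pi_0(a|s)\betar(a|s) = d_{\pi_0}(s)\pi(a|s)$, so marginalizing out $a$ and $s'$ up to the $f(s')$ weight turns $\E_{(s,a,s')\sim d_{\pi_0}}[w(s)\betar(a|s) f(s')]$ into $\sum_{s} d_{\pi_0}(s) w(s) \E_{(s',a)|s\sim d_\pi}[f(s')]$, i.e.\ $\E_{s\sim d_{\pi_0}}[w(s)\, \mathcal{T}_\pi^f(s)]$ where I write $\mathcal{T}_\pi^f(s) \defeq \E_{(s',a)|s\sim d_\pi}[f(s')]$. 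Likewise the term $\E_{(s,a,s')\sim d_{\pi_0}}[w(s')f(s')]$ depends only on the marginal of $s'$, which is exactly $d_{\pi_0}$ (the stationary / visitation distribution is invariant in the relevant sense — more precisely, using \eqref{dpisss} in the average case and \eqref{fixdpidisc} in the discounted case). So after rearranging,
\[
L(w,f) = \gamma \E_{s\sim d_{\pi_0}}\bigl[w(s)(\mathcal{T}_\pi^f(s) - f(s))\bigr] \;+\; (\text{boundary terms involving }d_0).
\]

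Next I would do the same computation with $w$ replaced by $\ws = \rnd$, but now invoking the defining relations of $d_\pi$: by Theorem~\ref{thm:one} (average case) or Theorem~\ref{thm:discount}/Lemma~\ref{lem:fixdpidisc} (discounted case), $L(\ws, f) = 0$ for every $f$. Subtracting, $L(w,f) = L(w,f) - L(\ws,f) = -\gamma\E_{s\sim d_{\pi_0}}[(\ws(s)-w(s))(\mathcal{T}_\pi^f(s)-f(s))] + (\text{the }d_0\text{ terms cancel appropriately})$, which upon recognizing $f(s) - \gamma\mathcal{T}_\pi^f(s) = \Pi f(s)$ gives $L(w,f) = \E_{s\sim d_{\pi_0}}[(\ws(s)-w(s))\Pi f(s)]$. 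I need to be careful about the normalization hypothesis $\E_{d_{\pi_0}}[w]=1$ in the average case: there the $d_0$ boundary terms are absent ($\gamma=1$), but $L(\ws,f)=0$ only holds up to a constant multiple of $\ws$, so the assumption $\E_{d_{\pi_0}}[w]=1$ is what pins down the constant and makes the subtraction clean; I would spell this step out. In the discounted case the $(1-\gamma)d_0$ terms from $L(w,f)$ and $L(\ws,f)$ combine to give exactly the $(1-\gamma)\E_{d_0}[(\ws - w)f]$ piece, which must be absorbed — here I would double-check that writing everything against $\dss_{\pi_0}$ (the visitation distribution, which already mixes in the $(1-\gamma)d_0$ restart) is what makes the two contributions merge into the single clean expectation $\E_{s\sim \dss_{\pi_0}}[(\ws-w)\Pi f]$ as stated.

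The main obstacle I anticipate is bookkeeping the discounted-case boundary terms and the precise meaning of $\dss_{\pi_0}$ versus $d_{\pi_0}$: the statement writes the final expectation under $\dss_{\pi_0}$, so I must verify that the $\gamma$-weighted transition expectation plus the $(1-\gamma)$-weighted initial expectation genuinely reassemble into a single expectation under the induced-chain invariant distribution, using Lemma~\ref{lem:fixdpidisc}. Everything else is routine change-of-measure. Finally, the concluding remark is immediate: substituting $f = V^\pi$ into \eqref{equ:pif} gives $\Pi V^\pi(s) = V^\pi(s) - \gamma\E_{(s',a)|s\sim d_\pi}[V^\pi(s')]$, which is verbatim the left-hand side of \eqref{equ:discbell} for $\gamma\in(0,1)$ and of \eqref{equ:avgbell} for $\gamma=1$.
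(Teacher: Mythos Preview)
Your approach is essentially the same as the paper's: expand $L(w,f)$, change measure on the $\betar$ term to produce $\Pi f$, and subtract $L(\ws,f)=0$. The paper's execution is cleaner in one place that resolves exactly the bookkeeping worry you flag at the end.

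Specifically, the paper applies identity~\eqref{ggd} (Lemma~\ref{lem:fixdpidisc}) with $d_{\pi_0}$ in place of $d_\pi$ and $wf$ in place of $f$ to rewrite the $w(s')f(s')$ term:
\[
\gamma\,\E_{(s,a,s')\sim d_{\pi_0}}[w(s')f(s')] \;=\; \E_{s\sim d_{\pi_0}}[w(s)f(s)] \;-\; (1-\gamma)\,\E_{s\sim d_0}[w(s)f(s)].
\]
Substituting this into \eqref{lwflambda}, the $(1-\gamma)\E_{d_0}[w f]$ pieces cancel against the $-w(s)$ part of the original $d_0$ term, leaving
\[
L(w,f) \;=\; -\,\E_{s\sim d_{\pi_0}}[w(s)\,\Pi f(s)] \;+\; (1-\gamma)\,\E_{s\sim d_0}[f(s)].
\]
The crucial point is that the residual boundary term $(1-\gamma)\E_{d_0}[f]$ is \emph{independent of $w$}. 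Hence subtracting $L(\ws,f)=0$ kills it outright and yields $\E_{d_{\pi_0}}[(\ws-w)\Pi f]$ with no leftover to ``absorb''. Your intermediate display $\gamma\E_{d_{\pi_0}}[w(\mathcal{T}_\pi^f - f)]$ has the wrong coefficient on the $f$ term (it should be $\gamma\mathcal{T}_\pi^f - f = -\Pi f$, not $\gamma(\mathcal{T}_\pi^f - f)$), which is what obscured this cancellation for you.

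Relatedly, your concern about $\dss_{\pi_0}$ versus $d_{\pi_0}$ is a red herring: the paper's own proof concludes with $\E_{s\sim d_{\pi_0}}[(\ws-w)\Pi f]$, so $\dss_{\pi_0}$ in the statement should simply be read as $d_{\pi_0}$; there is no augmented distribution needed to make the identity hold. Finally, the normalization hypothesis $\E_{d_{\pi_0}}[w]=1$ does not enter the derivation of \eqref{eq:wsdgbc} itself (the subtraction works for any $w$ since $L(\ws,f)=0$ unconditionally); it is carried for downstream use.
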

Lemma~\ref{lem:lwf} represents $L(w,f)$ as an inner product between $\ws-w$ and $\Pi f$ (under base measure $d_{\pi_0}$). 
This provides an alternative proof of Theorem~\ref{thm:discount}, since $L(w,f) =0, ~\forall f\in \F$ implies that $\ws-w$ is orthogonal with all $\Pi f$ and hence $\ws=w$ when $\{\Pi f\colon f\in \F\}$ is sufficiently rich. 

In order to make $(\ws-w)$ orthogonal to 
a given function $g$, it requires ``reversing'' operator $\Pi$: finding a function $f_g$ which solves $g= \Pi f_g$ for given $g$. 
Observing that $g=\Pi f_g$ can be viewed as a Bellman equation (Eqs.~(\ref{equ:avgbell})--(\ref{equ:discbell})) when taking $g$ and $f_g$ to be the reward and  value functions, respectively, we can derive an explicit representation of $f_g$ (Lemma~\ref{invBell} in Appendix). 
This allows one to gain insights into what discriminator set $\F$ would be a good choice, so that minimizing $\max_{f\in \F}L(w,f)$ yields good estimation  with desirable properties.
In the following,  by taking $g(s)\propto  \pm \mathbf{1}(s=\tilde s)$, $\forall \tilde s$, 
we can characterize the conditions on $\F$ under which the mini-max loss 
upper bounds
the estimation error of $\ws$ or $d_\pi$. 
\begin{thm}
\label{wsmax}
Let $\T_\pi^t(s'|s)$ be the $t$-step transition probability of $\T_\pi(s'|s)$. For $\forall \tilde s \in \mathcal S$, define 
\begin{align}
  f_{\tilde s}(s) = \begin{cases}
\sum_{t=0}^\infty\gamma^t \T_\pi^t(\tilde s | s)  & \text{when $0< \gamma < 1$}, \\[5pt]
\sum_{t=0}^\infty (\T_\pi^t(\tilde s | s) - d_\pi(\tilde s)) & \text{when $\gamma = 1$}, 
\end{cases}  
\end{align}
Assume Lemma~\ref{lem:lwf} holds. We have 
\begin{align*}
&   \max_{f\in \F} L(w,f) \geq \norm{d_\pi(s) - w(s)d_{\pi_0}(s)}_\infty,  & &  \text{if}~~~~~~\{\pm f_{\tilde s} \colon ~\forall \tilde s \in \Sset \}\subseteq \F, \\
& \max_{f\in \F} L(w,f) \geq \norm{\ws - w}_\infty,  & & 
\text{if}~~~~~~ \{\pm f_{\tilde s}/d_{\pi_0}(\tilde s) \colon ~\forall \tilde s \in \Sset \}\subseteq \F.
\end{align*}
\end{thm}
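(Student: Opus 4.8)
The plan is to construct, for each target state $\tilde s$, an explicit test function that ``inverts'' the operator $\Pi$ appearing in Lemma~\ref{lem:lwf}, so that $L(w, f_{\tilde s})$ reads off precisely the error at $\tilde s$; the functions $f_{\tilde s}$ in the statement are exactly these inverses. First I would rewrite the conclusion of Lemma~\ref{lem:lwf}: since $d_{\pi_0}(s)\big(\ws(s) - w(s)\big) = d_\pi(s) - w(s)\,d_{\pi_0}(s)$, the lemma gives
$L(w,f) = \sum_s \big(d_\pi(s) - w(s)\,d_{\pi_0}(s)\big)\,\Pi f(s)$, which is linear in $f$. Everything then reduces to evaluating $\Pi f_{\tilde s}$.

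Next I would compute $\Pi f_{\tilde s}$ directly from \eqref{equ:pif}, using $\E_{(s',a)\,|\,s\sim d_\pi}[g(s')] = \sum_{s'}\T_\pi(s'|s)g(s')$ together with the Chapman--Kolmogorov identity $\sum_{s'}\T_\pi(s'|s)\,\T_\pi^t(\tilde s|s') = \T_\pi^{t+1}(\tilde s|s)$. Both series defining $f_{\tilde s}$ then telescope: in the discounted case one obtains $\Pi f_{\tilde s}(s) = \gamma^0\,\T_\pi^0(\tilde s|s) = \mathbf 1(s=\tilde s)$, and in the average case $\Pi f_{\tilde s}(s) = \T_\pi^0(\tilde s|s) - d_\pi(\tilde s) = \mathbf 1(s=\tilde s) - d_\pi(\tilde s)$. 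Before doing this I would note that $f_{\tilde s}$ is well defined: the series is absolutely convergent when $0<\gamma<1$ since $\T_\pi^t \le 1$, and convergent when $\gamma = 1$ by the geometric convergence $\T_\pi^t(\cdot|s)\to d_\pi$ guaranteed by the ergodicity hypothesis.

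Plugging these into the rewritten $L$: in the discounted case $L(w, f_{\tilde s}) = d_\pi(\tilde s) - w(\tilde s)\,d_{\pi_0}(\tilde s)$ at once; in the average case there is an extra term $-\,d_\pi(\tilde s)\sum_s\big(d_\pi(s) - w(s)\,d_{\pi_0}(s)\big) = -\,d_\pi(\tilde s)\big(1 - \E_{d_{\pi_0}}[w]\big)$, which vanishes because Lemma~\ref{lem:lwf} is invoked under $\E_{d_{\pi_0}}[w] = 1$, so again $L(w, f_{\tilde s}) = d_\pi(\tilde s) - w(\tilde s)\,d_{\pi_0}(\tilde s)$. By linearity $L(w, -f_{\tilde s}) = -\big(d_\pi(\tilde s) - w(\tilde s)\,d_{\pi_0}(\tilde s)\big)$, hence if $\{\pm f_{\tilde s}\colon \tilde s\in\Sset\}\subseteq\F$ then $\max_{f\in\F}L(w,f) \ge |d_\pi(\tilde s) - w(\tilde s)\,d_{\pi_0}(\tilde s)|$ for every $\tilde s$, i.e.\ $\ge \norm{d_\pi - w\,d_{\pi_0}}_\infty$. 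For the second inequality I would repeat the argument with the rescaled functions $\pm f_{\tilde s}/d_{\pi_0}(\tilde s)$ (legitimate since $d_{\pi_0}(\tilde s)>0$ and $L$ is linear in $f$), which gives $L\big(w, \pm f_{\tilde s}/d_{\pi_0}(\tilde s)\big) = \pm\big(\ws(\tilde s) - w(\tilde s)\big)$ and therefore $\max_{f\in\F}L(w,f) \ge \norm{\ws - w}_\infty$.

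The main obstacle is the middle step: verifying that $f_{\tilde s}$ genuinely solves $\Pi f_{\tilde s} = \mathbf 1(\cdot = \tilde s)$ (up to the $d_\pi(\tilde s)$ offset when $\gamma=1$) and that its defining series converges -- this is where the ergodicity assumption is actually used, and it is essentially the ``inverse Bellman'' computation (Lemma~\ref{invBell} in the appendix) specialized to the reward $g = \pm\mathbf 1(\cdot = \tilde s)$. The remaining steps are routine bookkeeping with the linearity of $L(w,\cdot)$ and the normalization $\E_{d_{\pi_0}}[w] = 1$.
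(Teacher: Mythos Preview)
Your proposal is correct and follows essentially the same approach as the paper: choose $g(s)=\mathbf 1(s=\tilde s)$ (respectively $g(s)=\mathbf 1(s=\tilde s)/d_{\pi_0}(\tilde s)$), identify $f_{\tilde s}$ as the inverse-Bellman solution $f_g$ of Lemma~\ref{invBell}, and read off $L(w,f_{\tilde s})$ from Lemma~\ref{lem:lwf}. The only cosmetic difference is that you verify $\Pi f_{\tilde s}=\mathbf 1(\cdot=\tilde s)$ (or $\mathbf 1(\cdot=\tilde s)-d_\pi(\tilde s)$) by a direct telescoping computation rather than citing Lemma~\ref{invBell}, and you spell out explicitly why the extra $-d_\pi(\tilde s)$ offset in the average case is harmless under the normalization $\E_{d_{\pi_0}}[w]=1$; the paper's proof leaves that last point implicit.
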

\vspace{-1mm}
Since our main goal is to estimate the expected total reward $\R_\pi$ instead of the density ratio $\ws$,  
it is of interest to select $\F$ to directly bound the estimation error of the total reward. Interestingly, this can be achieved once $\F$ includes the true value function $V^\pi$.  
\begin{thm}\label{bund}
Define $\R_\pi[w]$ to be the reward estimate using estimated density ratio $w(s)$ (which may not equal the true ratio $\ws$) and infinite number of trajectories from $d_{\pi_0}$, that is, 
$$
\R_\pi[w] \defeq \E_{(s,a,s')\sim d_{\pi_0}}[w(s)\betar(a|s)r(s,a)]\,. 
$$
Assume $w$ is properly normalized such that $\E_{s\sim d_{\pi_0}}[w(s)] = 1$, we have 
$L(w, V^\pi) =  \R_\pi - \R_\pi[w] .$ 
Therefore, if $\pm V^\pi \in \F$, we have $|\R_\pi[w] - \R_\pi|\leq \max_{f \in \F} L(w,f).$ 
\end{thm}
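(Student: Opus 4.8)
The plan is to evaluate the bilinear loss $L(w,f)$ at $f=V^\pi$ using Lemma~\ref{lem:lwf}, which (under the assumed normalization $\E_{s\sim d_{\pi_0}}[w(s)]=1$) represents $L(w,f)$ as the inner product of $\ws-w$ with $\Pi f$ under base measure $d_{\pi_0}$. Combined with the remark in Lemma~\ref{lem:lwf} that $\Pi V^\pi$ coincides with the left-hand side of the Bellman equations \eqref{equ:avgbell}--\eqref{equ:discbell}, this gives
\begin{align*}
L(w,V^\pi) = \E_{s\sim d_{\pi_0}}\big[(\ws(s)-w(s))\,\Pi V^\pi(s)\big], \qquad \Pi V^\pi(s)=\E_{a|s\sim\pi}\big[r(s,a)-c\big],
\end{align*}
where $c=\R_\pi$ in the average case ($\gamma=1$) and $c=0$ in the discounted case ($0<\gamma<1$).

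First I would dispose of the constant $c$: since $\E_{s\sim d_{\pi_0}}[\ws(s)]=\sum_s d_\pi(s)=1=\E_{s\sim d_{\pi_0}}[w(s)]$ by the normalization assumption, the term $\E_{s\sim d_{\pi_0}}[(\ws(s)-w(s))\,c]$ vanishes, so in both cases $L(w,V^\pi)=\E_{s\sim d_{\pi_0}}[(\ws(s)-w(s))\,\E_{a|s\sim\pi}[r(s,a)]]$. Next I would evaluate the two resulting terms. For the $\ws$ term, $\E_{s\sim d_{\pi_0}}[\ws(s)\,\E_{a|s\sim\pi}[r(s,a)]]=\sum_{s,a} d_{\pi_0}(s)\tfrac{d_\pi(s)}{d_{\pi_0}(s)}\pi(a|s)r(s,a)=\E_{(s,a)\sim d_\pi}[r(s,a)]=\R_\pi$ by \eqref{eq:rrr}. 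For the $w$ term, rewriting the inner expectation as $\E_{a|s\sim\pi}[r(s,a)]=\E_{a|s\sim\pi_0}[\betar(a|s)r(s,a)]$ gives $\E_{s\sim d_{\pi_0}}[w(s)\,\E_{a|s\sim\pi}[r(s,a)]]=\E_{(s,a,s')\sim d_{\pi_0}}[w(s)\betar(a|s)r(s,a)]=\R_\pi[w]$. Subtracting yields $L(w,V^\pi)=\R_\pi-\R_\pi[w]$.

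Finally, since $L(w,f)$ is linear in $f$ (both terms of \eqref{lwflambda} are), $L(w,-V^\pi)=-L(w,V^\pi)$; hence if $\pm V^\pi\in\F$ then $|\R_\pi-\R_\pi[w]|=\max\{L(w,V^\pi),\,L(w,-V^\pi)\}\le\max_{f\in\F}L(w,f)$, which is the asserted bound. An alternative, self-contained route would avoid Lemma~\ref{lem:lwf} and compute $L(w,V^\pi)$ directly from \eqref{lwflambda}, substituting the Bellman equation for $r(s,a)$ and using the fixed-point identity \eqref{ggd} for $d_{\pi_0}$ to telescope the $V^\pi(s')$ contributions; this is longer but routine. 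I do not expect a serious obstacle here: the only points needing care are the cancellation of the constant $c=\R_\pi$ in the average-reward case — which is exactly where the normalization $\E_{d_{\pi_0}}[w]=1$ is used — and keeping the sign of $L(w,V^\pi)$ consistent throughout.
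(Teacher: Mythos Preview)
Your proposal is correct and follows essentially the same route as the paper's proof: both invoke Lemma~\ref{lem:lwf} to write $L(w,V^\pi)=\E_{s\sim d_{\pi_0}}[(\ws(s)-w(s))\,\Pi V^\pi(s)]$, identify $\Pi V^\pi$ with $r_\pi(s)-c$ via the Bellman equations, and then evaluate the resulting inner product to obtain $\R_\pi-\R_\pi[w]$. Your write-up is in fact a bit more careful than the paper's, making explicit where the normalization $\E_{d_{\pi_0}}[w]=1$ is used to kill the constant $c=\R_\pi$ in the average case, and spelling out the linearity-in-$f$ argument for the final $\pm V^\pi\in\F$ bound.
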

\myempty{
The above analysis gives the theoretical property of the expect objective function, assuming infinite data.  
In practice, an empirical loss function is used based on samples of finite size. 
Specifically, 
 let $\hat \R [w]$ the empirical estimator that we constructed based on $w$ using \eqref{eq:rpidd}. We have 
\begin{align*}
 |\hat \R_n[ w] - \R_\pi | 
& \leq   |\R[ w] - \R_\pi | ~+~ |\hat \R[ w] - \R[w]|, 
\end{align*}
where the first term $ |\R[\hat w] - \R_\pi |$ accounts the deterministic error due to the estimation of $\hat w \approx \ws$, 
and the second term $|\hat \R[w] - \R[ w]|$ is due to the use of empirical averaging, and decreases to zero as the data size increases.  
A full course analysis of the error bound can be established using standard concentration bounds, which we leave for future work (\red{or in Appendix}). 
}

\section{Related Work}

Our off-policy setting is related to, but different from, off-policy value-function
learning~\cite{precup00eligibility,precup01off,sutton16emphatic,hallak16generalized,munos16safe,liu18sample}.  Our goal is to estimate a single scalar that \emph{summarizes} the quality of a policy (a.k.a. off-policy value estimation as called by some authors~\citep{li2015toward}).  
However, our idea can be extended to estimating value functions as well, by using estimated density ratios to weight observed transitions (c.f., the distribution $\mu$ in LSTDQ~\citep{lagoudakis03least}).  We leave this as future work.

IS-based off-policy value estimation has seen a lot of interest recently for short-horizon problems, including contextual bandits~\cite{murphy01marginal,hirano2003efficient,dudik11doubly,wang17optimal}, and achieved many empirical successes~\cite{dudik11doubly,strehl11learning}.  When extended to long-horizon problems, it faces an exponential blowup of variance, and variance-reduction techniques are used to improve the estimator~\cite{jiang16doubly,thomas16data,guo17using,wang17optimal}.  However, it can be proved that in the worst case, the mean squared error of \emph{any} estimator has to depend exponentially on the horizon~\citep{li2015toward,guo17using}.  Fortunately, many problems encountered in practical applications may present structures that enable more efficient off-policy estimation, as tackled by the present paper.
An interesting open direction is to characterize theoretical conditions that can ensure tractable estimation for long horizon problems.

Few prior work directly target \emph{infinite}-horizon problems.  There exists approaches that use simulated samples to estimate stationary state distributions~\citep[Chapter~IV]{asmussen07stochastic}.  However, they need a reliable model to draw such simulations, a requirement that is not satisfied in many real-world applications.  To the best of our knowledge, the recently developed COP-TD algorithm~\citep{hallak17consistent} is 
the only work that attempts to estimate $\rnd$ as an intermediate step of estimating the value function of a target policy $\pi$.  They take a stochastic-approximation approach and show asymptotic consistence.  
However, extending their approach to continuous state/action spaces appears challenging. 

Finally, there is a comprehensive literature of two-sample density ratio estimation \citep[e.g.,][]{nguyen2010estimating,sugiyama2012density}, which estimates the density ratio of two distributions from pairs of their samples.
Our problem setting is different in that we only have data from $d_{\pi_0}$,  but not from $d_\pi$; this makes the traditional density ratio estimators inapplicable to our problem. 
Our method is made possible by taking the special temporal structure of MDP into consideration.   


\section{Experiment}
In this section, we  conduct experiments on different environmental settings to compare our method with existing off-policy evaluation methods. We compare with the standard trajectory-wise and step-wise IS and WIS methods. 
We do not report the results of unnormalized IS because they are generally significantly worse than WIS methods~\cite{precup00eligibility,liu01monte}. 
In all the cases, we also compare with an \emph{on-policy oracle} and a \emph{naive averaging} baseline, 
which estimates the reward using direct averaging over the trajectories generated by the target policy and behavior policy, respectively. 
For problems with discrete action and state spaces, 
we also compare with a standard model-based method, which estimates the transition and reward model and then calculates expected reward explicitly using the model up to the desired truncation length.
When applying our method on problems with finite and discrete state space,  we optimize $w$ and $f$ in the space of all possible functions (corresponding to using a delta kernel in terms of RKHS).
For continuous state space, 
we assume $w$ is a standard feed-forward neural network, and  $\F$ is a RKHS with a standard Gaussian RBF kernel whose bandwidth equals the median of the pairwise distances between the observed data points. 
%

Because we cannot simulate truly infinite steps in practice, we use the behavior policy to generate trajectories of length $T$, 
and evaluate the algorithms based on the mean square error (MSE) w.r.t. the $T$-step rewards of a large number of trajectories of length $T$ from the target policy. 
We expect that our method 
gets better as $T$ increases, since it is designed for infinite horizon problems, 
while the IS/WIS methods receive large variance and  
deteriorate as $T$ increases. 

\newcommand{\tmpdsz}{.12}
\begin{figure*}
\centering
\begin{tabular}{ccccc}
\hspace{-0cm}\includegraphics[height=\tmpdsz\textwidth]{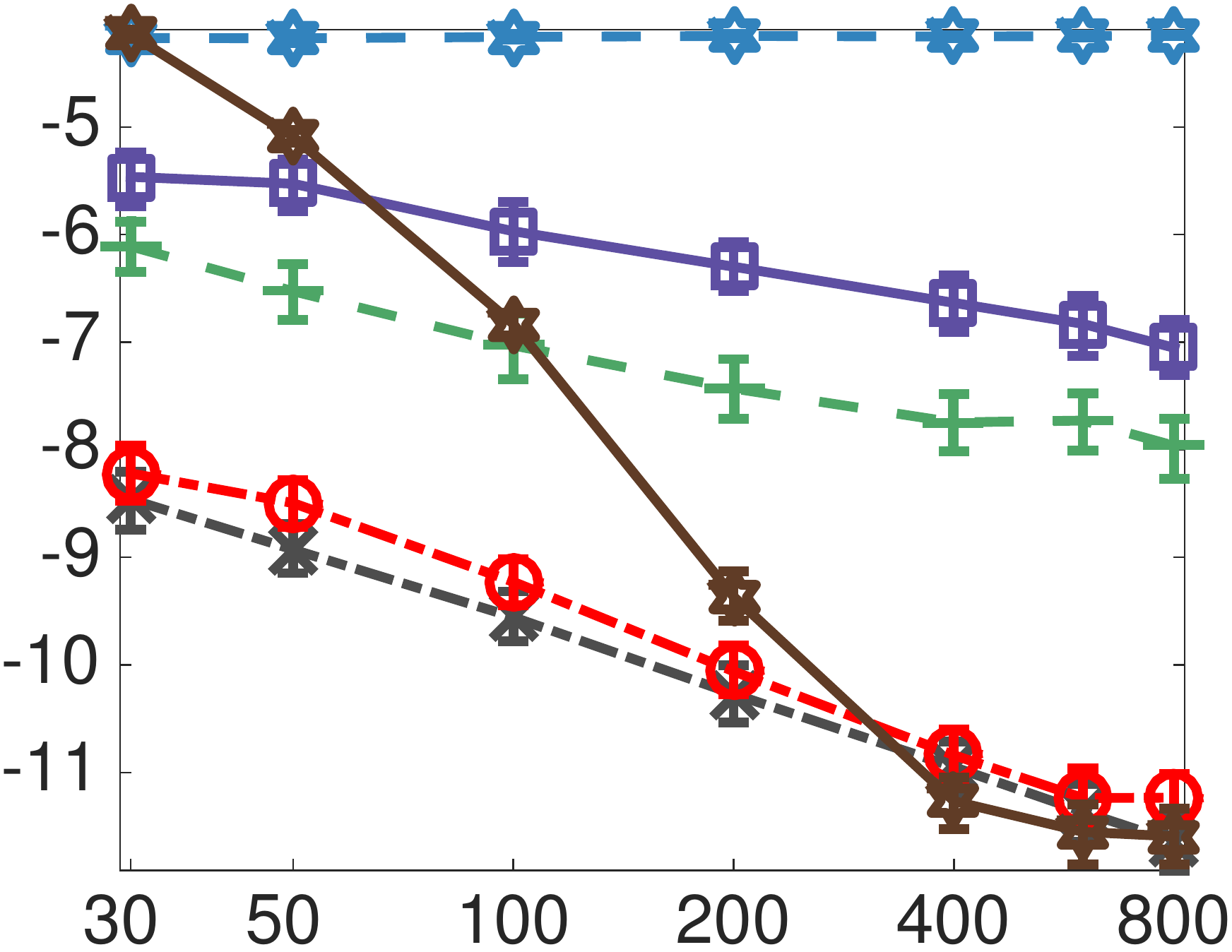} &
\!\!\!\!\!\!\!\! \includegraphics[height=\tmpdsz\textwidth]{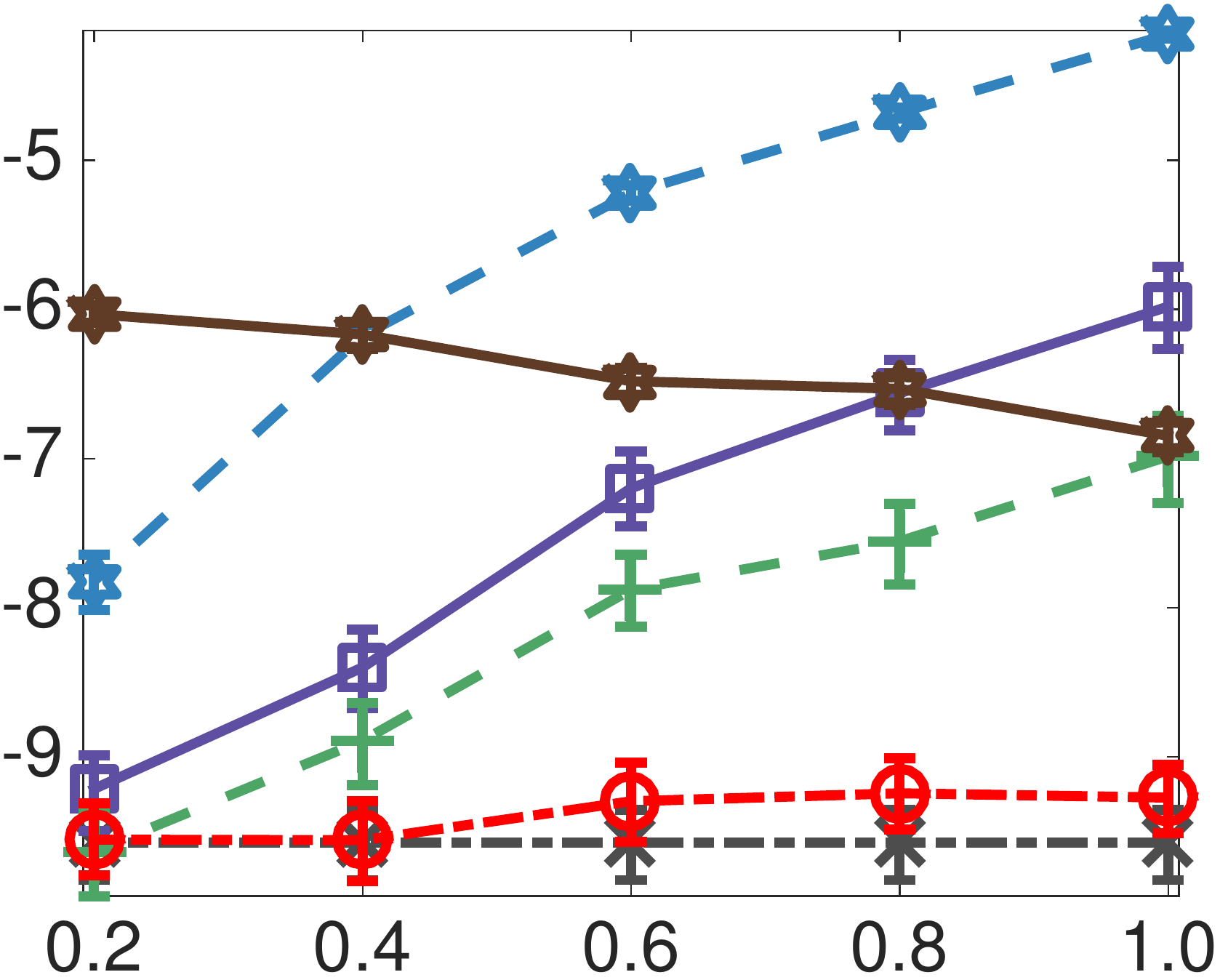} &
\!\!\!\!\!\!\!\! \includegraphics[height=\tmpdsz\textwidth]{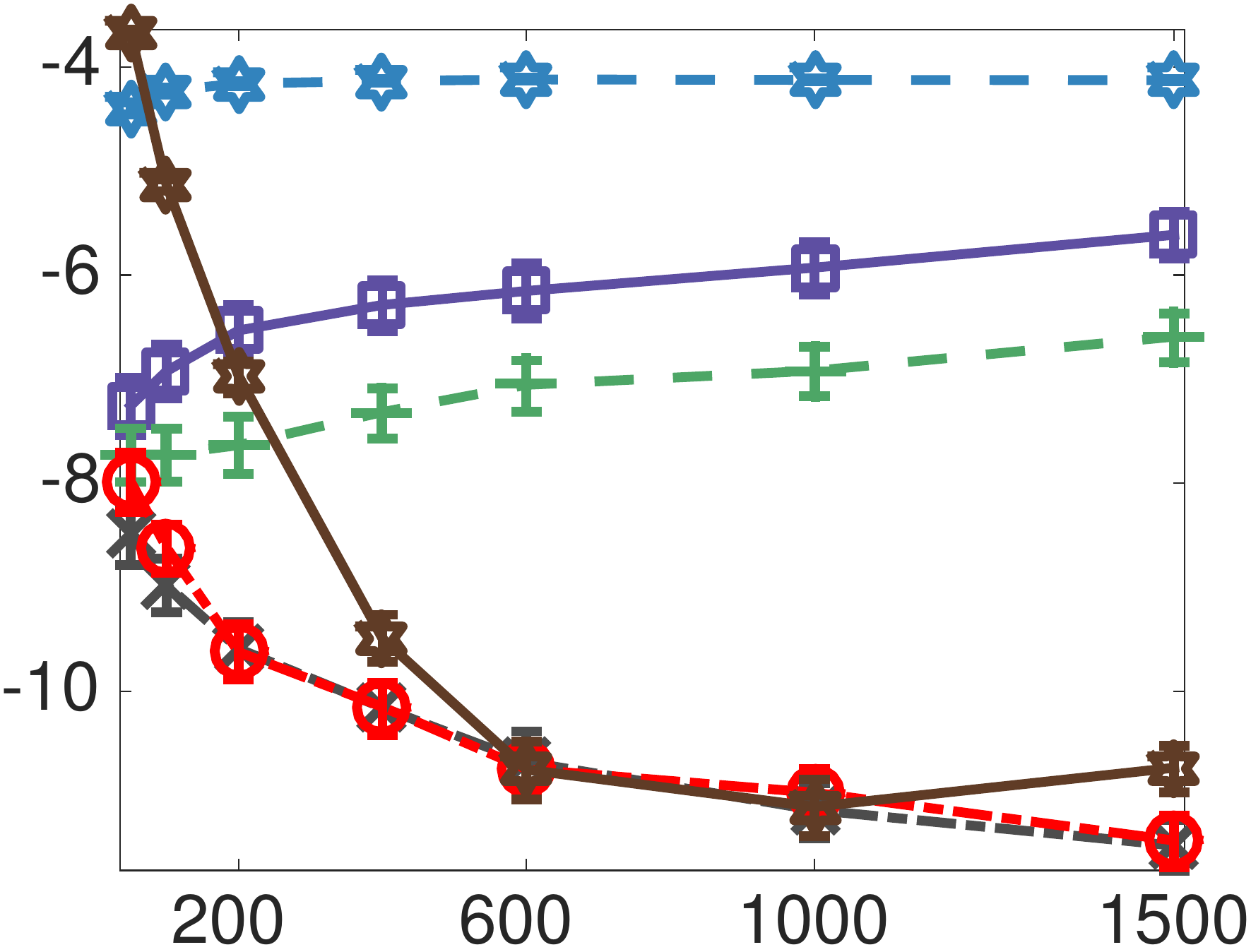}
\hspace{-0.2cm}\raisebox{2.1em}{ \includegraphics[height = 0.07\textwidth]{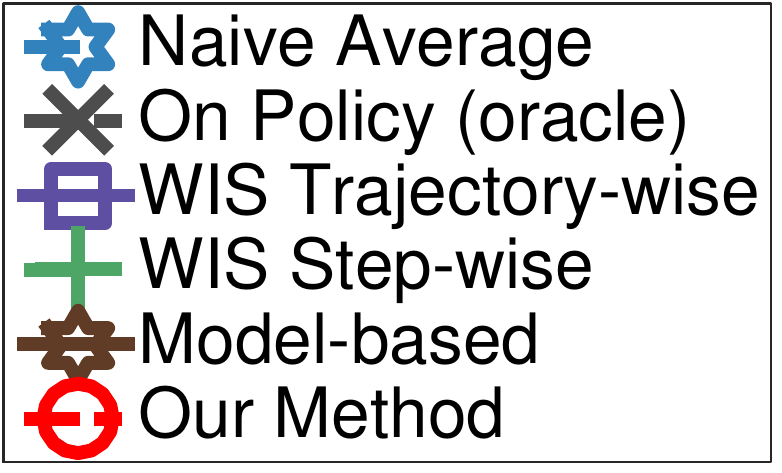}} &
\!\!\!\!\!\!\!\includegraphics[height=\tmpdsz\textwidth]{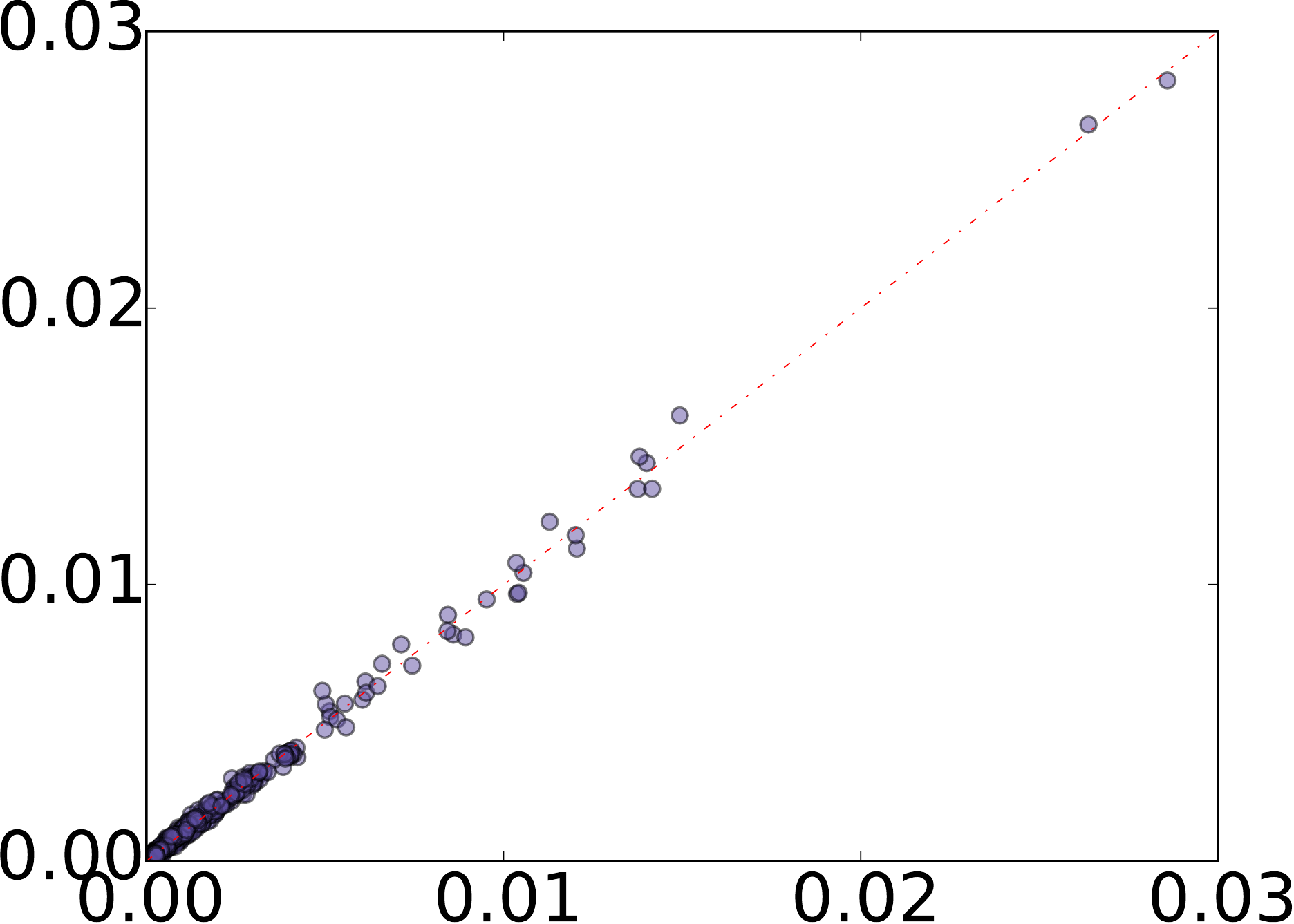} &
\!\!\!\!\! \includegraphics[height=\tmpdsz\textwidth]{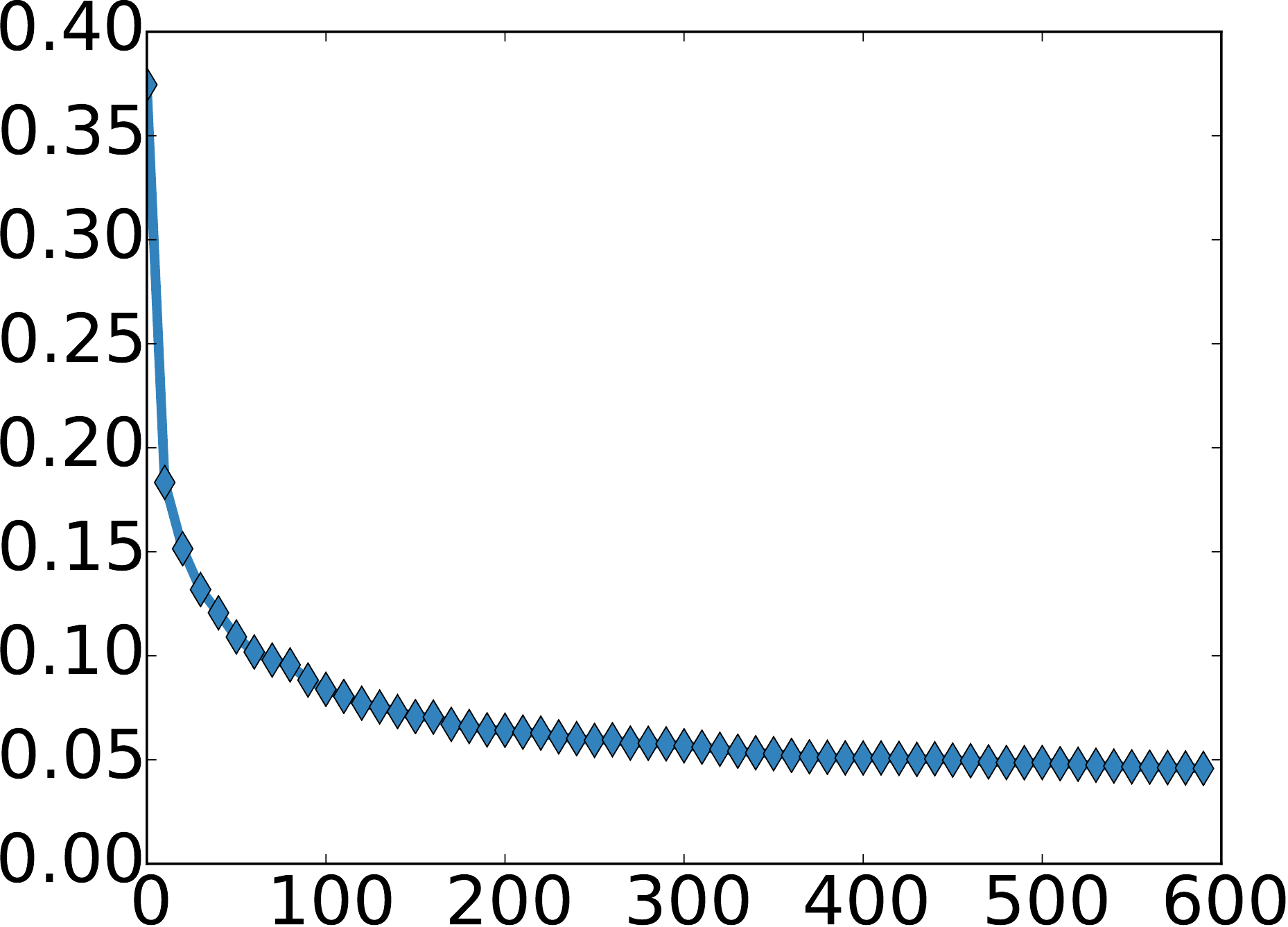}\\
\hspace{-0.3cm}\tiny \# of Trajectories ($n$)
& \tiny Different Behavior Policies 
& \tiny  \hspace{-1.5cm}Truncated Length $T$ 
& \tiny $\hat d_{\pi}(s)$ vs. $d_{\pi}(s)$ Plot
&\tiny Training Iteration \\
\tiny \hspace{-0.3cm}(a) & \tiny (b) &  \hspace{-1.5cm}\tiny(c) & \tiny(d) &\tiny(e) \\
\end{tabular}
\begin{picture}(1,1)
\put(-400,0){\rotatebox{90}{\tiny logMSE}}
\put(-198,40){\rotatebox{90}{\tiny logMSE}}
\put(125,40){\rotatebox{90}{\tiny TV Distance}}
\end{picture}
\vspace{-2mm}
\caption{\small Results on Taxi environment with average reward ($\gamma=1$). (a)-(b) show the performance of various methods as the number of trajectory (a) and the difference between behavior and target policies (b) vary. (c) shows the change of truncated length $T$. 
 (d) shows that scatter plot of pairs $(\hat d_{\pi}(s), d_{\pi}(s)),~\forall s$. The diagonal lines means exact estimation.  (e) shows the weighted total variation distance between $\hat d_{\pi}:= \hat w d_{\pi_0}$ and $d_{\pi}$ along the training iteration of the ratio estimator $\hat w$.
The number of trajectory is fixed to be 100 in (b,c,d). The potential behavior policy $\pi_{+}$ (the right most points in (b)) is used in (a,c,d,e).
} 
\label{fig:taxi_avg}
\end{figure*}
\vspace{-1mm}

\paragraph{Taxi Environment} 
Taxi~\cite{dietterich2000hierarchical} is a 2D grid world simulating taxi movement along the grids.
A taxi moves North, East, South, West or attends to pick up or drop off a passenger. 
It receives 
a  reward of $20$ when it successfully picks up a passenger or drops her off at the right place, and otherwise a reward of -1 every time step. 
The original taxi environment would stop when the taxi successfully picks up a passenger and drops her off at the right place. 
We modify the environment to make it infinite horizon, 
by allowing passengers to randomly appear and disappear at every corner of the map at each time step.
We use a grid size of $5\times 5$, which yields $2000$ states in total ($25\times 2^{4}\times 5$, corresponding to $25$ taxi locations, $2^4$ passenger appearance status and $5$ taxi status (empty or with one of 4 destinations)).

\begin{figure*}
\begin{tabular}{ccccc}
\includegraphics[height=0.14\textwidth]{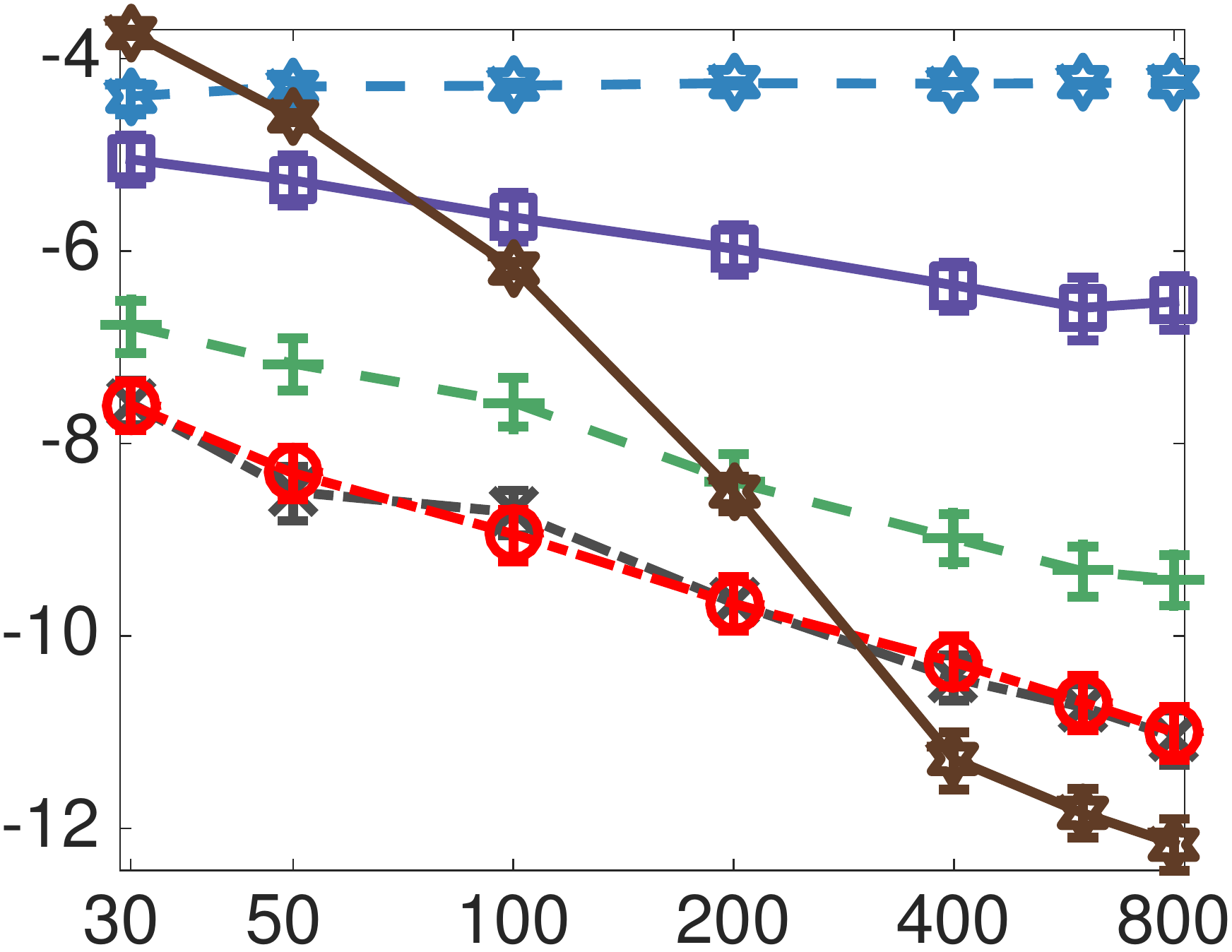} &
\includegraphics[height=0.14\textwidth]{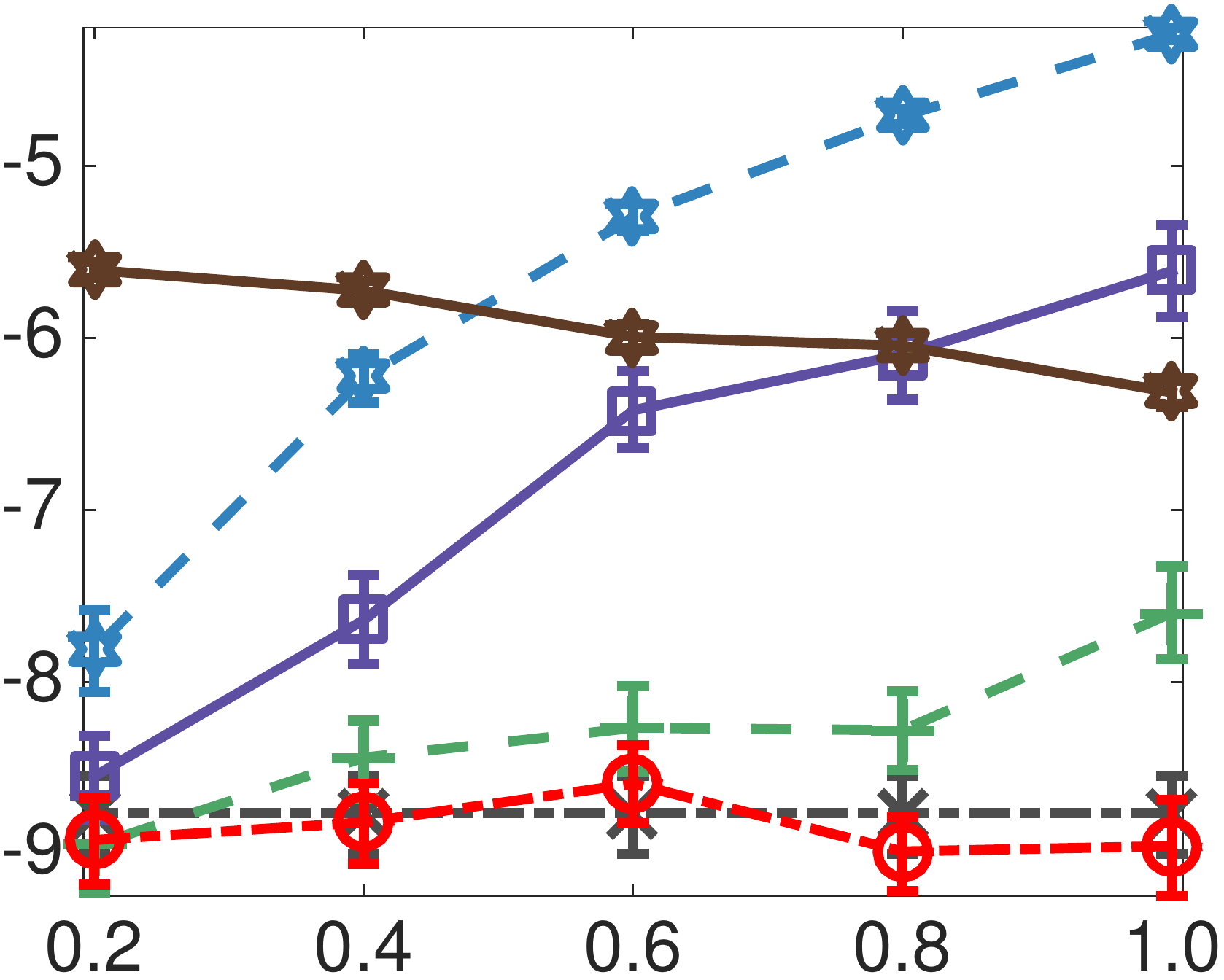} & 
\includegraphics[height=0.14\textwidth]{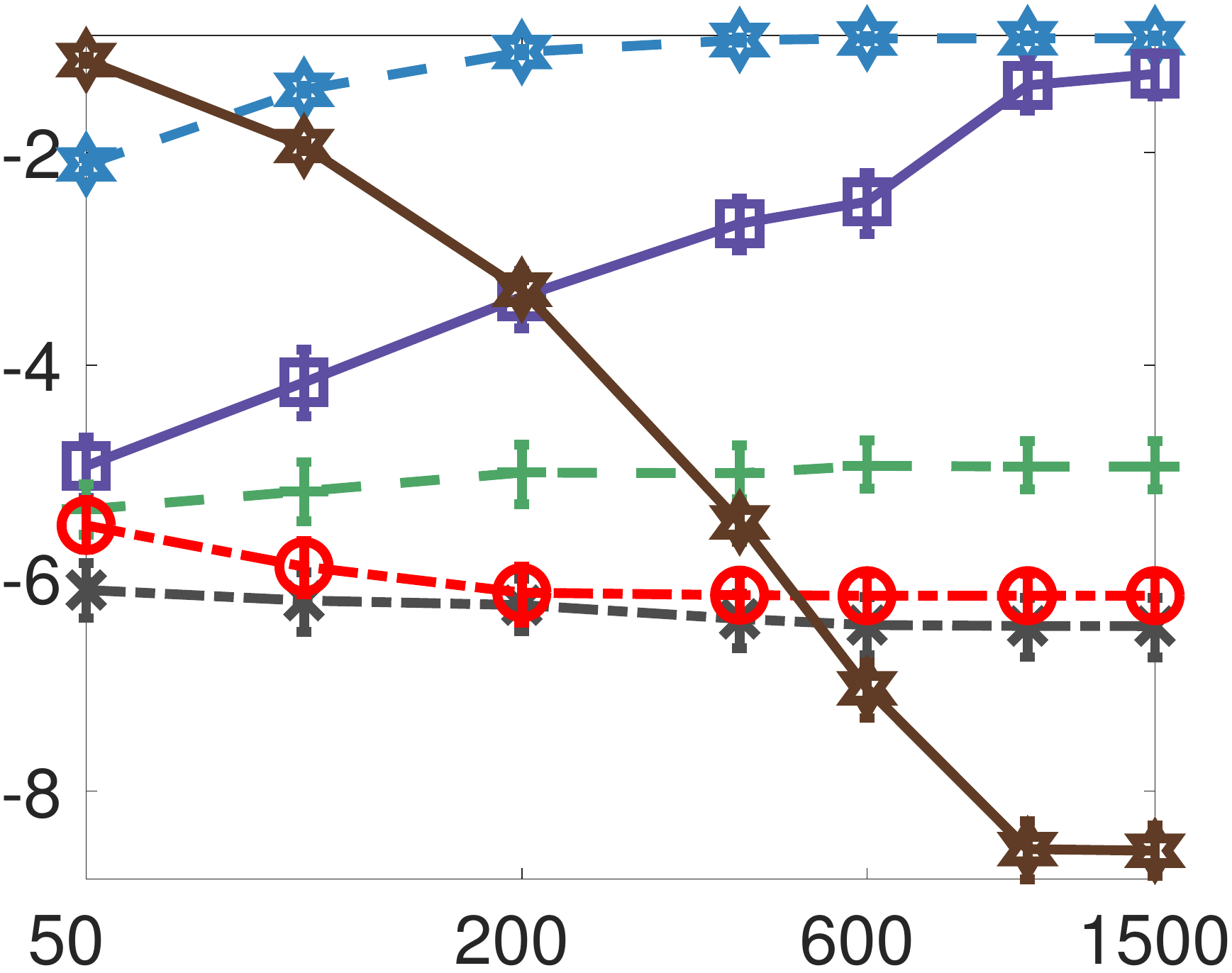} & 
\includegraphics[height=0.14\textwidth]{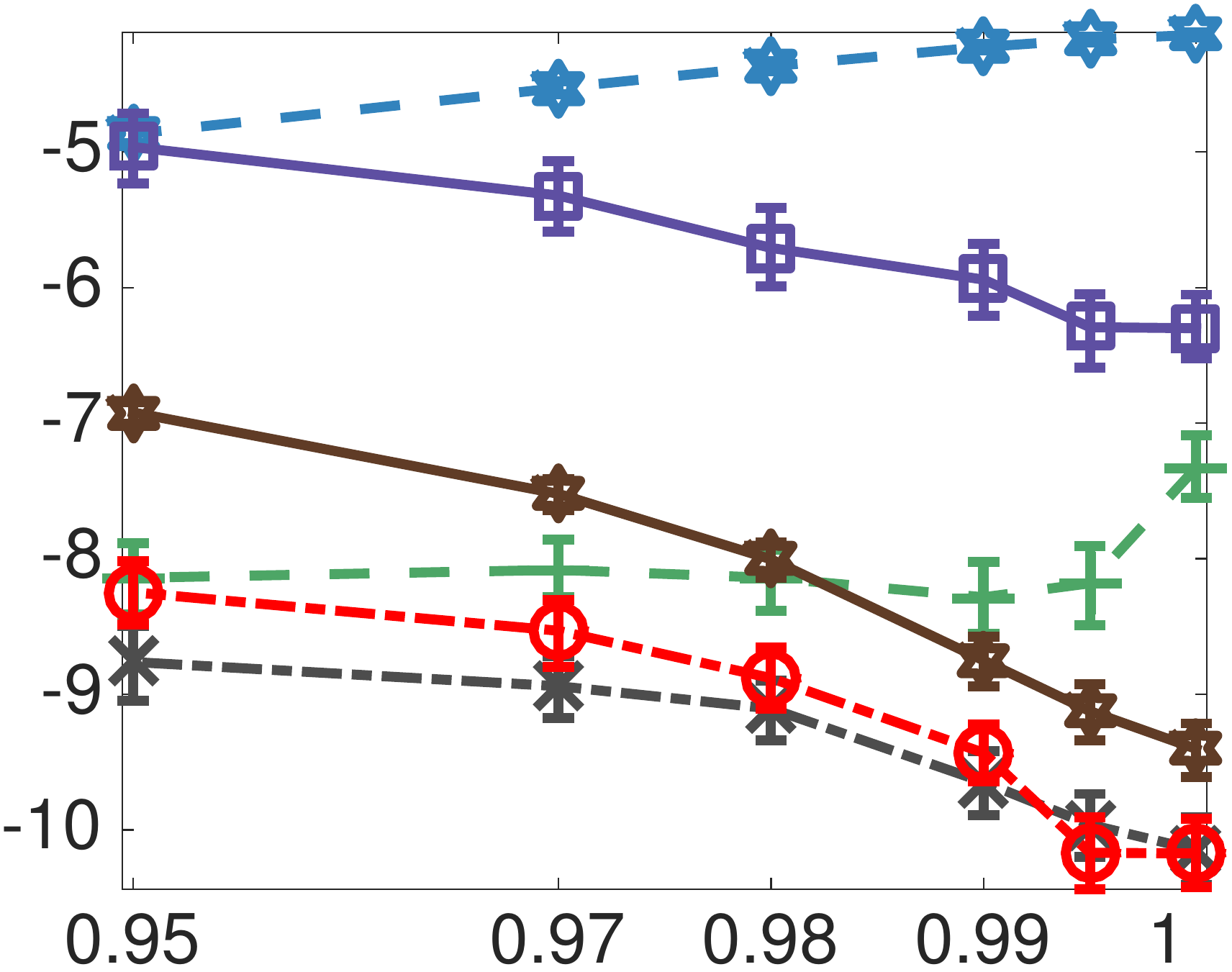} & 
\!\!\!\!\!\raisebox{2.5em}{
\includegraphics[height = 0.07\textwidth]{density_ratio_policy_evaluation/figures/density_ratio/finalize_fig/fig_legend.pdf}} \\[-3pt]
\tiny \# of Trajectories ($n$) 
& \tiny  Different Behavior Policies 
& \tiny Truncated length $T$
& \tiny  Discounted Factor $\gamma$  \\
\tiny (a) & \tiny (b) & \tiny (c) & \tiny (d)
\end{tabular}
\begin{picture}(1,1)
\put(-301,7){\rotatebox{90}{\tiny logMSE}}
\end{picture}
\vspace{-1mm}
\caption{\small 
Results on Taxi with discounted reward ($0<\gamma<1$), as we vary the number of  trajectory $n$ (a), 
the difference between target and behavior policies (b), the truncated length $T$(c), 
the discount factor $\gamma$ (d). 
The default values of the parameters, unless it is varying, are $\gamma = 0.99$, $n =200$, 
$T=400$. 
The potential behavior policy $\pi_{+}$ (the right most points in (b)) is used in (a,c,d).
}
\label{fig:taxi_discounted}
\end{figure*}
\vspace{-1mm}

To construct target and behavior policies for testing our algorithm, 
we set our target policy to be the final policy $\pi_*$ after running Q-learning for $1000$ iterations, 
and set another policy $\pi_+$ after $950$ iterations. The behavior policy is $\pi = (1-\alpha)\pi_* + \alpha\pi_{+}$, where $\alpha$ is a mixing ratio that can be varied.
%

\vspace{-2mm}
\paragraph{Results in Taxi Environment}
Figure~\ref{fig:taxi_avg}(a)--(b) show results with average reward. 
We can see our method performs almost as well as the on-policy oracle, outperforming all the other methods. To evaluate the approximation error of the estimated density ratio $\hat w$, we plot in Figure~\ref{fig:taxi_avg}(c) the weighted total variation distance between $\hat d_\pi = \hat w d_{\pi_0}$ with the true $d_\pi$ with TV distance as we optimize the loss function. 
Figure~\ref{fig:taxi_avg}(d) shows scatter plot of $\{(\hat d_\pi(s), d_\pi(s))~:~\forall s \in \Sset\}$ at convergence, indicating our method correctly estimates the true density ratio over the state space.


Figure~\ref{fig:taxi_discounted} shows similar results for discounted reward. 
From Figure~\ref{fig:taxi_discounted}(c) and (d), we can see that typical IS methods deteriorate 
as the trajectory length $T$  and discount factor $\gamma$ increase, respectively, which is expected since their variance grows exponentially with $T$. 
In contrast, our density ratio method performs better as trajectory length $T$ increases, and is robust as $\gamma$ increases. 

\paragraph{Pendulum Environment} 
The Taxi environment features discrete action and state spaces. 
We now test Pendulum, which has a continuous state space of $\mathbb{R}^3$ and action space of $[-2,2]$. 
In this environment, we want to control the pendulum to make it stand up as long as possible (for the average case), or as fast as possible (for small discounted case). 
The policy is taken to be a truncated Gaussian whose mean is a neural network of the states and  variance a constant. 

We train a near-optimal policy $\pi_*$ using REINFORCE and set it to be the target policy. 
The behavior policy is set to be $\pi = (1-\alpha)\pi_* + \alpha\pi_{+}$, where $\alpha$ is a mixing ratio, and $\pi_+$ is another policy from REINFORCE when it has not converged. 
%
Our results are shown in Figure~\ref{fig:pend}, where we again find that our method generally outperforms the standard trajectory-wise and step-wise WIS, and works favorably in long-horizon problems (Figure~\ref{fig:pend}(b)). 

\newcommand{\httmp}{.14}
\begin{figure*}
\begin{tabular}{ccccc}
\includegraphics[height=\httmp\textwidth]{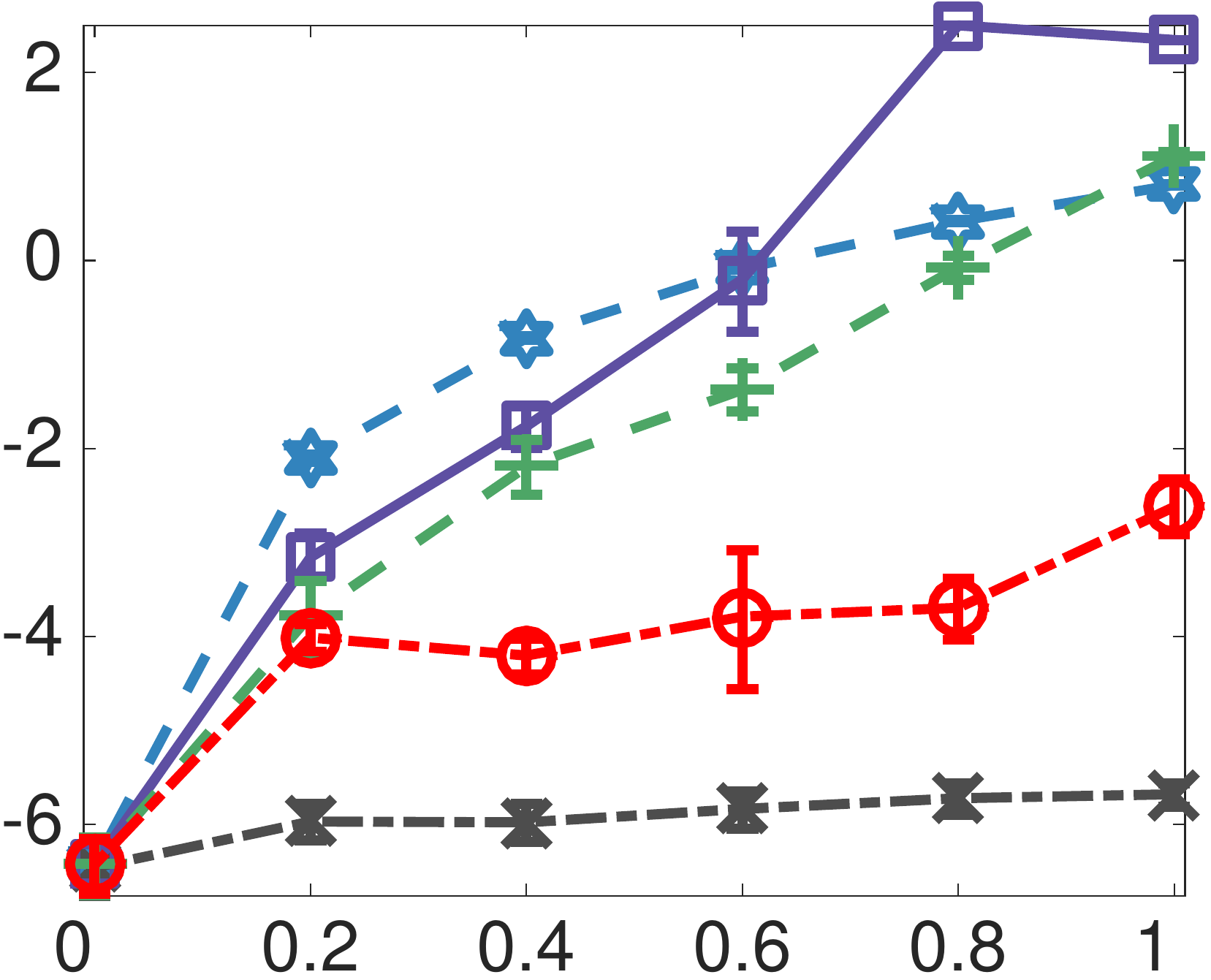} &
\includegraphics[height=\httmp\textwidth]{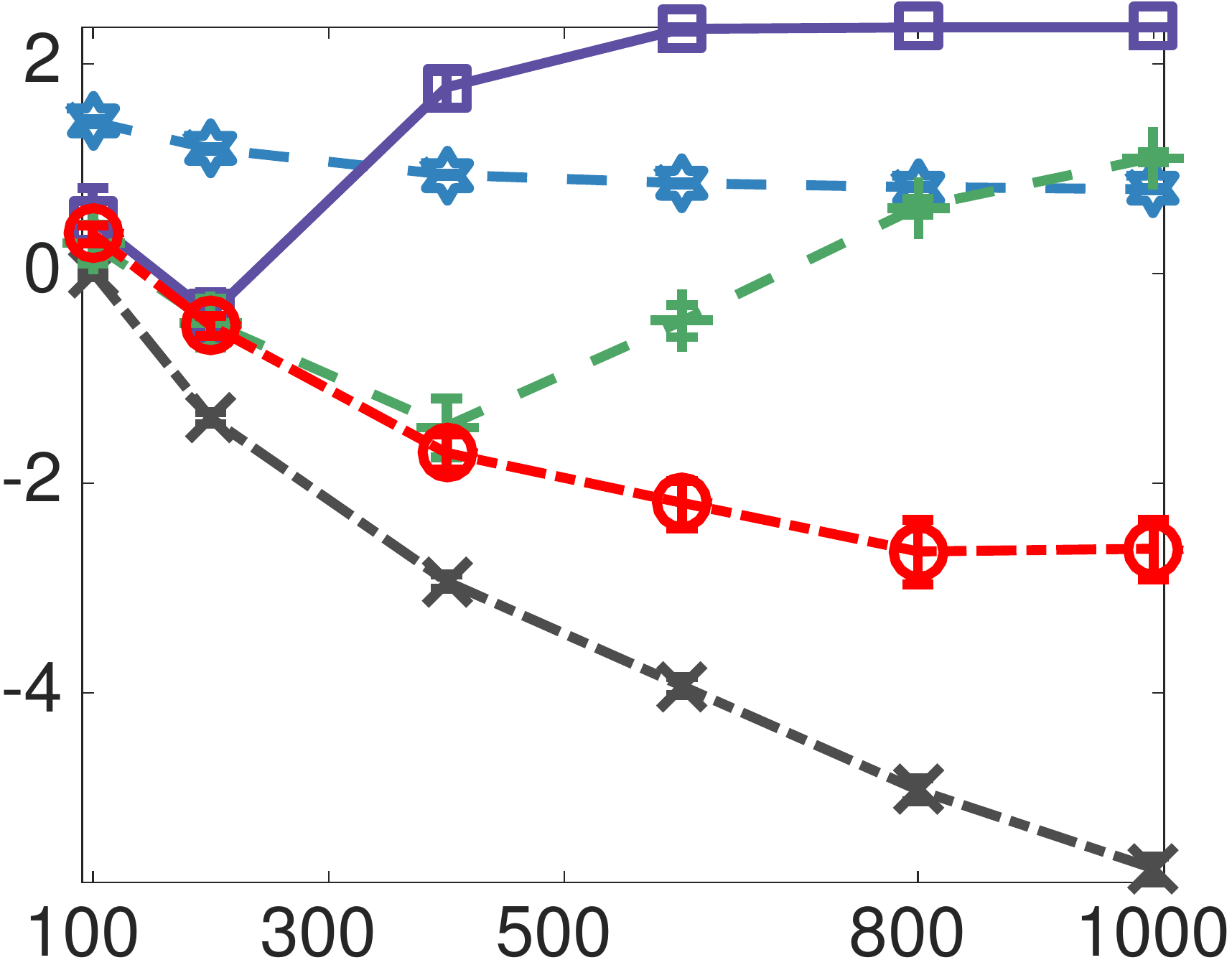} &  
\includegraphics[height=\httmp\textwidth]{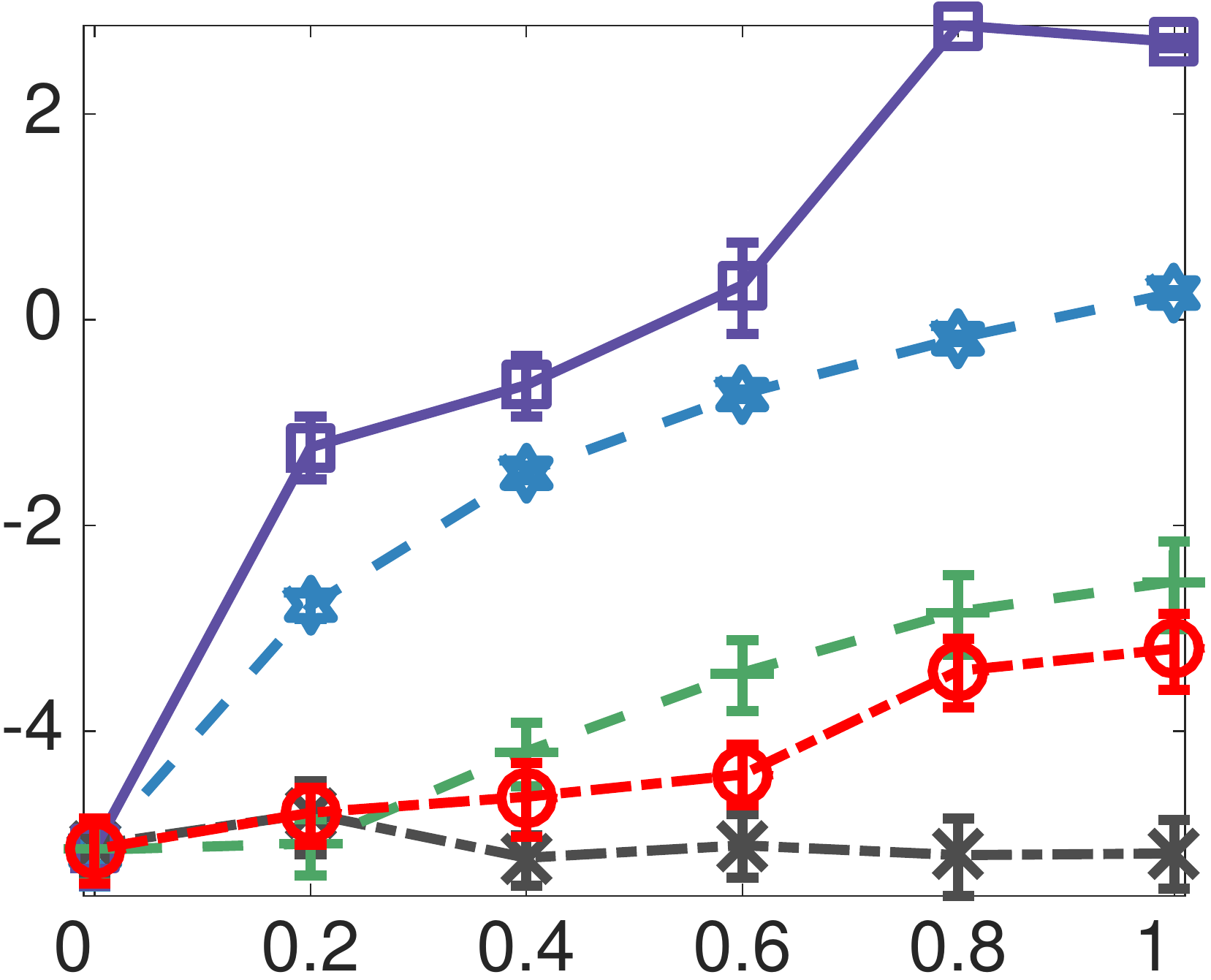}  &
\includegraphics[height=\httmp\textwidth]{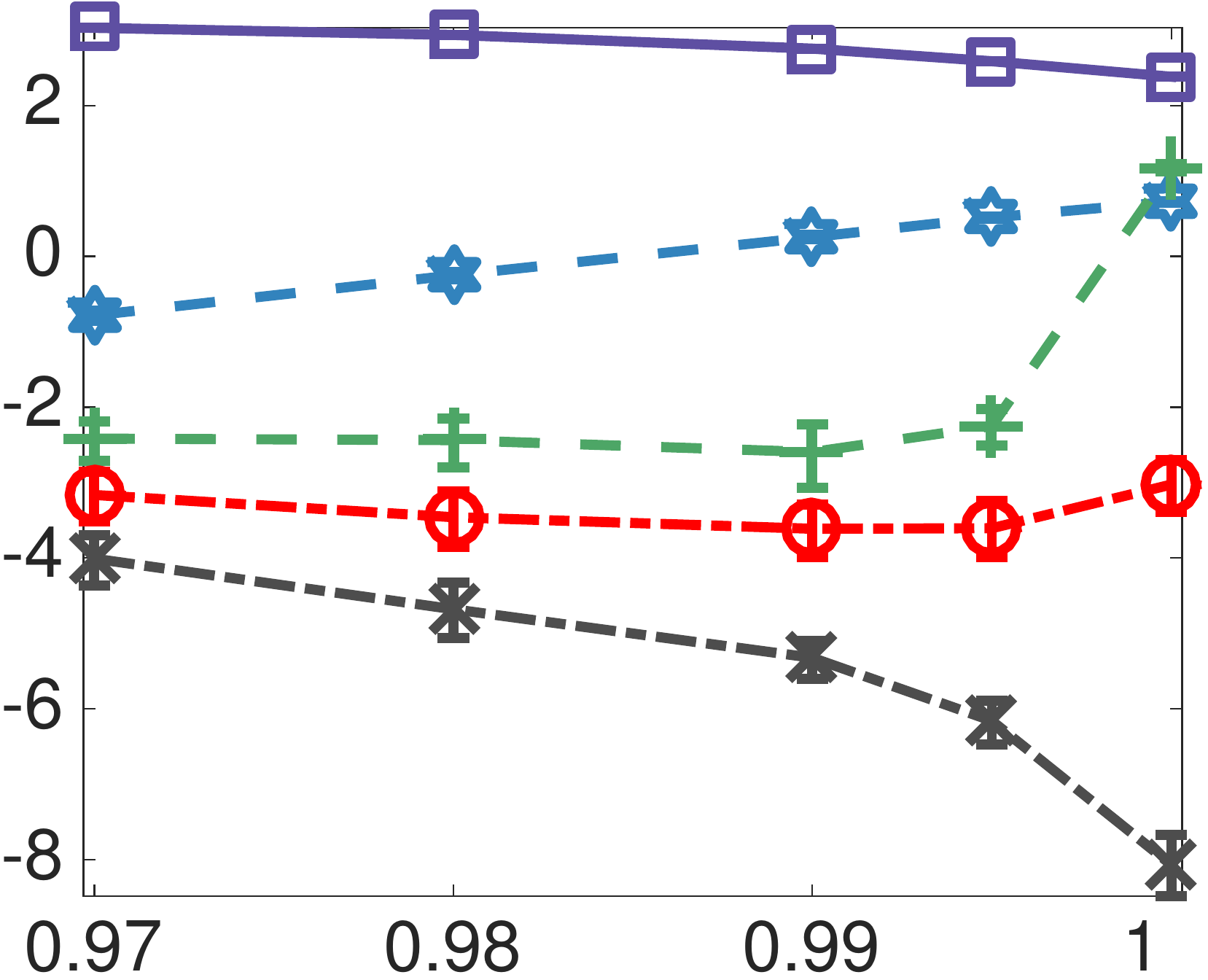}  &
\raisebox{2.5em}{\includegraphics[height = 0.07\textwidth]{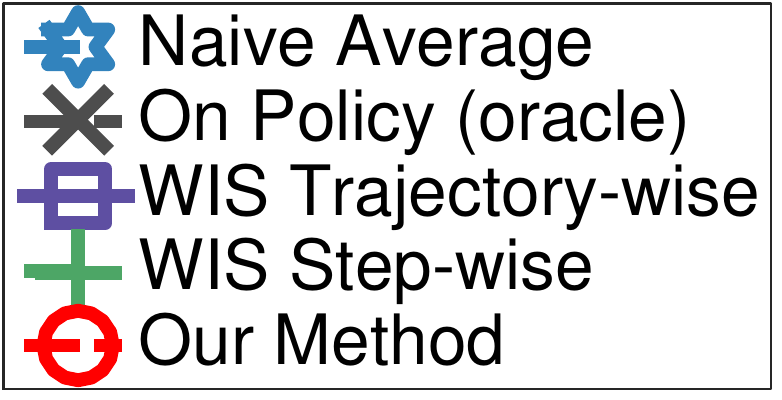}} \\[-2pt]
\tiny Mixing Ratio $\alpha $ & 
\tiny Truncated Length $T$ &
\tiny Mixing Ratio $\alpha $ & 
\tiny  Discount Factor $\gamma$ & \\[-2pt]
\tiny (a) Average Reward Case  &
\tiny (b) Average Reward Case  &
\tiny (c) Discounted Reward Case  &
\tiny (d) Discounted Reward Case  &
\end{tabular}
\begin{picture}(1,1)
\put(-395,7){\rotatebox{90}{\tiny logMSE}}
\end{picture}
\vspace{-1mm}
\caption{\small Results on Pendulum.  
(a)-(b) show the results in the average reward case when we vary the mixing ratio $\alpha$ in the behavior policies and the truncated length $T$, respectively. 
(c)-(d) show the results of the discounted reward case when we vary 
mixing ratio $\alpha$ in the behavior policies and discount factor $\gamma$, respectively. 
The default parameters are $n=150$,  $T=1000$, $\gamma=0.99$, $\alpha = 1$. 
}
\label{fig:pend}
\end{figure*}
\vspace{-2mm}

\begin{figure}[tb]
\centering
\begin{tabular}{cccccc}
\includegraphics[height=.14\textwidth,width=.14\textwidth, trim={10cm 5cm 10cm 5cm},clip]{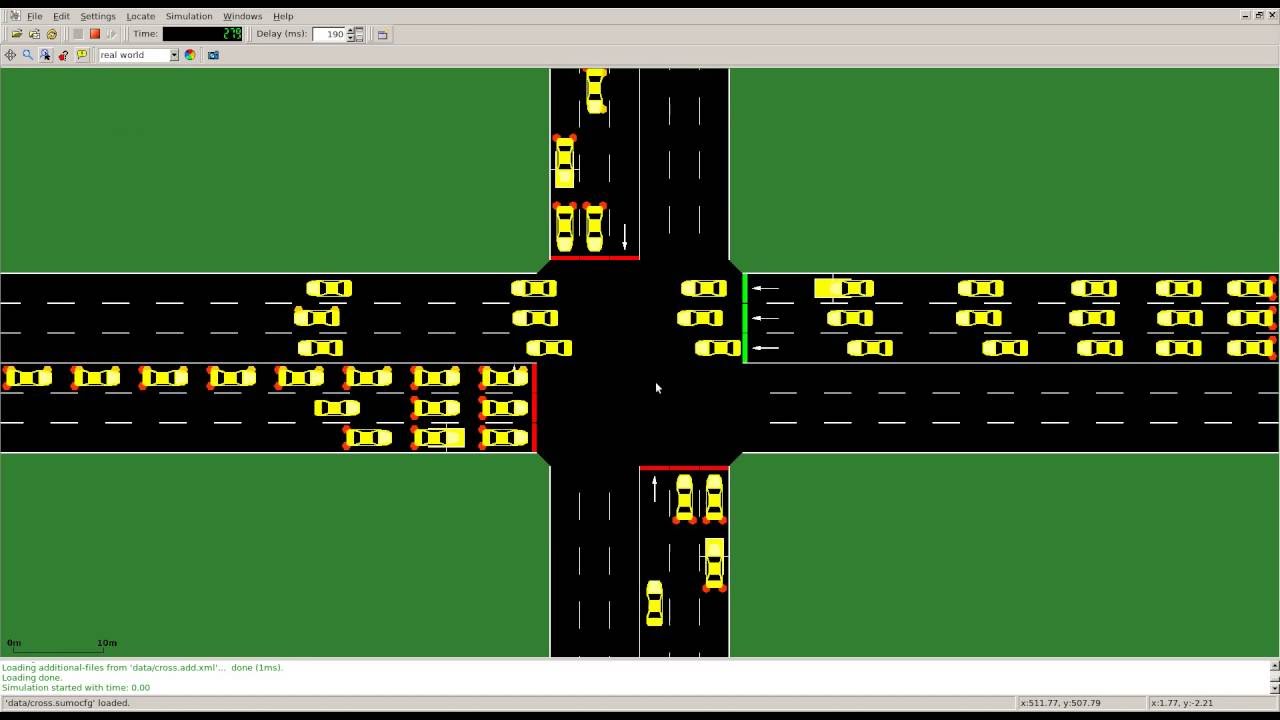}&  ~~~~~~~~~~& 
\includegraphics[height=.14\textwidth]{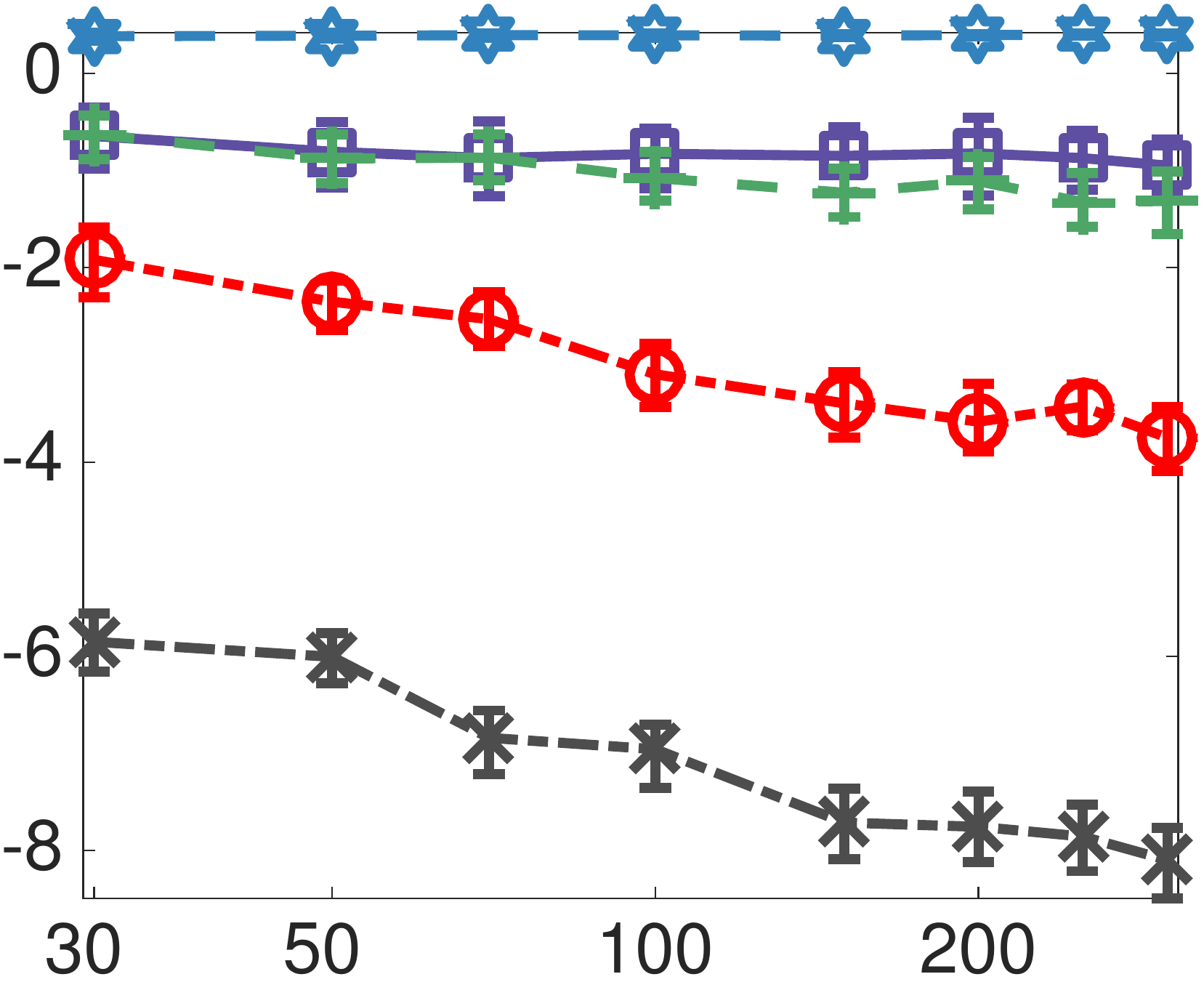} &
\includegraphics[height=.14\textwidth]{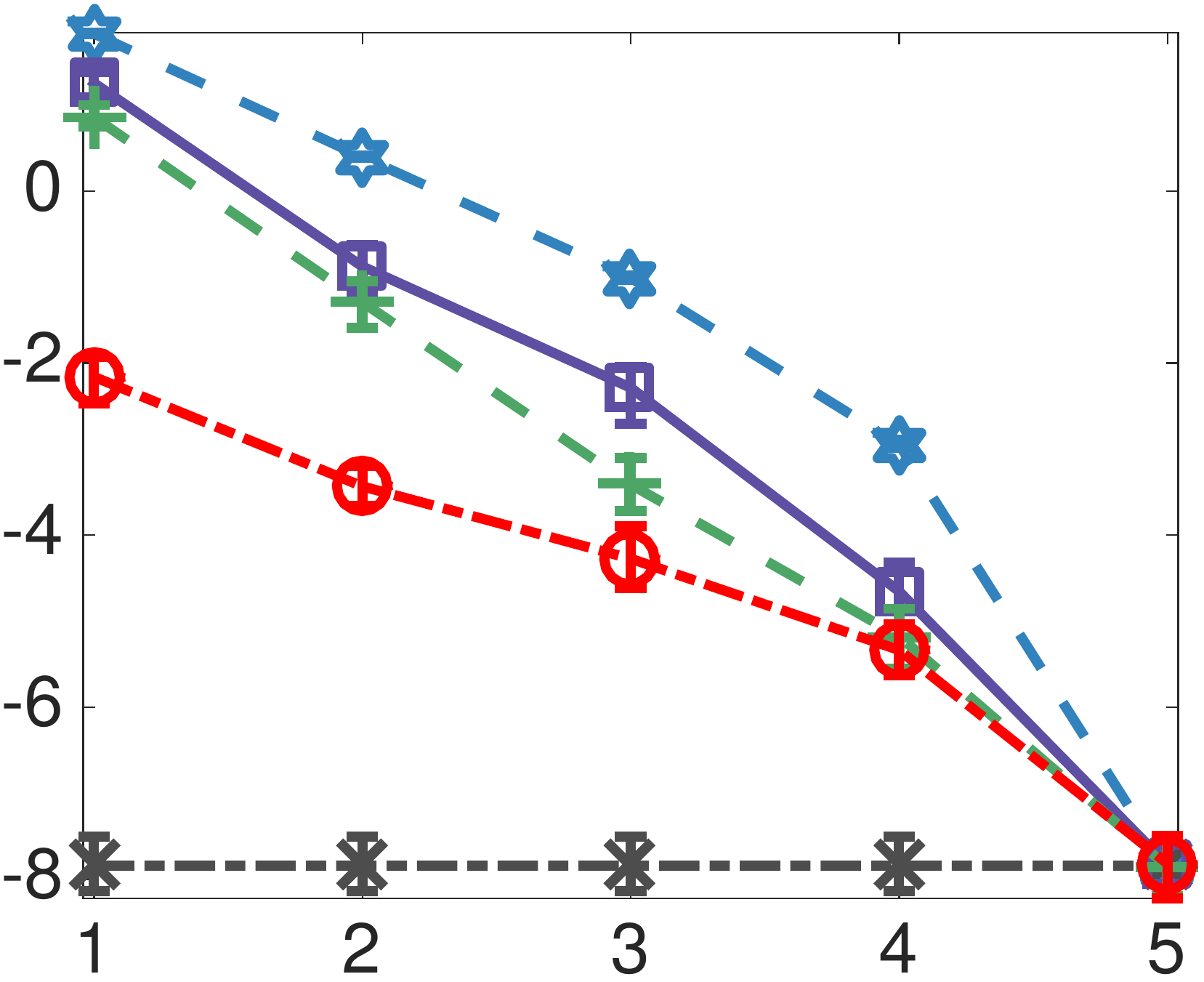} &
\includegraphics[height=.14\textwidth]{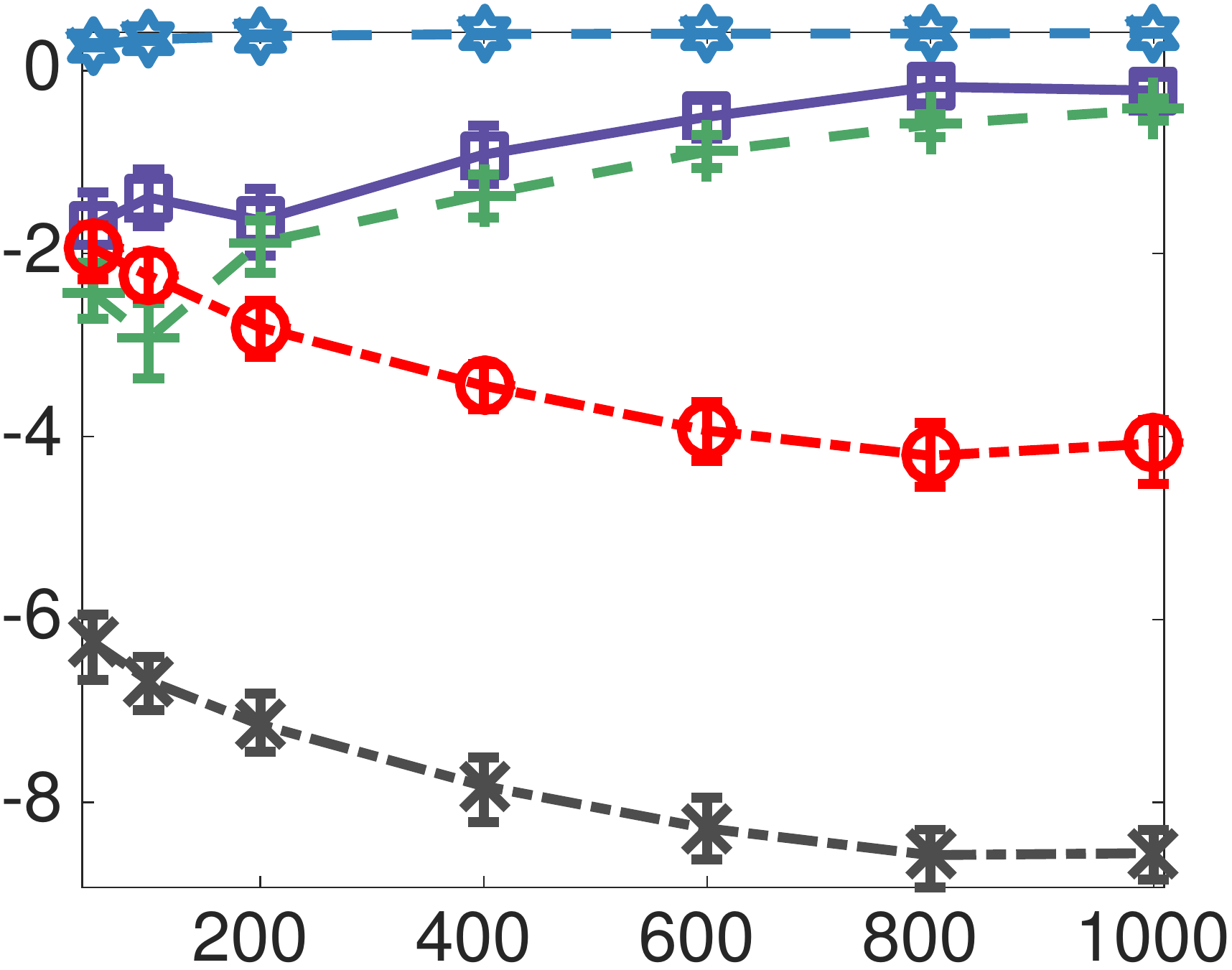} &
\!\!\!\!\!\!\!\!\!\raisebox{3em}{
\includegraphics[height = 0.06\textwidth]{density_ratio_policy_evaluation/figures/density_ratio/finalize_fig/fig_short_legend.pdf}}
\\[-2pt]
\tiny{(a) Environment} & & \tiny{(b) \# of Trajectories $(n)$} & \tiny{(c) Different Behavior Policies}  &   \tiny{(d) Truncated Length $T$}  
\end{tabular}
\begin{picture}(1,1)
\put(-93,34){\rotatebox{90}{\tiny logMSE}}
\put(-9,34){\rotatebox{90}{\tiny logMSE}}
\put(74,34){\rotatebox{90}{\tiny logMSE}}
\end{picture}
\vspace{-2mm}
\caption{\small 
Results on SUMO (a) with average reward, 
as we vary the number of trajectories (b), choose different behavior policies (c), and truncated size (d). 
When being fixed, the default parameters are $n=250$, $T=400$. 
The behavior policy in (c) with x-tick 2 is used in (b) and (d).} 
\label{fig:sumo-single}
\end{figure}

\paragraph{SUMO Traffic Simulator}
SUMO \cite{krajzewicz2012recent} is an open source traffic simulator; see Figure~\ref{fig:sumo-single}(a) for an illustration. 
We consider the task of reducing traffic congestion by modelling traffic light control as a reinforcement learning problem \cite{van2016coordinated}. 
We use TraCI, a built-in ``Traffic Control Interface'', to interact with the SUMO simulator. 
Full details of our environmental settings can be found in Appendix~\ref{sumo}. 
Our results are shown in Figure~\ref{fig:sumo-single}, where we again find that our method is consistently better than standard IS methods. 

\section{Conclusions}
We study the off-policy estimation problem in infinite-horizon problems and develop a new algorithm based on direct estimation of the stationary state density ratio 
between the target and behavior policies. 
Our mini-max objective function enjoys nice theoretical properties and yields an intriguing connection with Bellman equations that is worth further investigation. 
Future directions include scaling our method to larger scale problems and 
extending it to estimate value functions and leverage off-policy data in policy optimization. 

\section*{Acknowledgement}
This work is supported in part by NSF CRII 1830161.
We would like to acknowledge Google Cloud for their support.

\bibliography{main_ref}

\newpage \clearpage
\appendix

\section{Several Variants of IS- and WIS-based Estimators}
\label{app:is-wis}

\lihong{Lihong (or other) can add a bit more to make this section more self-contained.  Not high priority.}

Denote by $\gamma_t = \gamma^t/\sum_{t=0}^{\Tminusone}\gamma^t$ for notation simplicity. Define  
$$
 w_{0:T}(\vtau):=\prod_{t=0}^{\Tm} \frac{\pi(a_t|s_t)}{\pi_0(a_t|s_t)}. 
$$
Then we have the following two key formulas, which derive the trajectory-wise, and step-wise importance sampling (IS) estimators, respectively. 
\begin{align}
 R^T_\pi  
 & =  \E_{\vtau \sim p_{\pi_0}} \left [\sum_{t=0}^{\Tminusone}  w_{0:T}(\vtau)  \gamma_t  r_t \right ]~~~~~~~\text{\it(Trajectory-wise)} \label{trajexp}\\
 & =  \E_{\vtau \sim p_{\pi_0}} \left [\sum_{t=0}^{\Tminusone}  w_{0:t}(\vtau)  \gamma_t  r_t \right ]~~~~~~~\text{\it(Step-wise)} \label{stepexp}
\end{align}
where the only difference of \eqref{trajexp} and \eqref{stepexp} 
is that \eqref{stepexp} replaces the $w_{0:T}$ in \eqref{trajexp} with $w_{0:t}$, yielding smaller variance without changing the expectation. This is made possible because 
$w_{0:t} = \E_{\vtau \sim p_{\pi_0}}[w_{0:T}(\vtau) ~|~ \vtau_{0:\tm}]$. 
Therefore, step-wise estimator can be viewed as Rao-backwellizing each term $ w_{0:T}(\vtau)  \gamma_t  r_t$ in \eqref{trajexp} by conditioning on $\vtau_{0:\tm}$.  

Given a set of $m$ observed trajectories $\vtau^i = \{s_t^i, a_t^i, r_t^i\}_{t=0}^T$, $\forall i=1,\ldots,m$, drawn from $p_{\pi_0}$. 
The trajectory-wise and step-wise estimators are 
\begin{align*}
\textit{Trajectory-wise:} &&
\hat\R^T_\pi = \frac{1}{Z_T}\sum_{t=0}^{\Tminusone}  \sum_{i=1}^m  \gamma_t w_{0:T}^i  r_t^i\,, &&
\textit{Step-wise:} &&
\hat\R^T_\pi = \sum_{t=0}^{\Tminusone}  \sum_{i=1}^m \frac{1}{Z_t} \gamma_t w_{0:t}^i  r_t^i\,, 
\end{align*}
where $w_{0:t}^i = w_{0:t}(\vtau^i)$ and $Z_t$ is a normalization constant of the importance weights: 
when $Z_t = m,~~~ \forall t$, the corresponding estimators (called Trajectory-wise IS  and Step-wise IS, respectively) provide unbiased estimates of $R_\pi^T$; 
when $Z_t = \sum_{i=1}^m w_{0:t}^i$, the corresponding estimators are weighted (or self-normalized) importance sampling (called Trajectory-wise WIS and Step-wise WIS, respectively), which introduce bias but often have lower variance.  
It has been shown that the Step-wise WIS often performs the best among all these variants \cite{precup00eligibility,liu01monte}. 

In comparison, our method can be viewed as a further Rao-backwellization of the step-wise estimators. Define 
$$
w_{t:t}(a_t, s_t) = \E_{\vtau\sim p_{\pi_0}} \left [ w_{0:T}(\vtau) ~|~ (s_t, a_t) \right ] = \frac{d_\pi(s_t)}{d_{\pi_0}(s_t)} \frac{\pi(a_t|s_t)}{\pi_0(a_t|s_t)}.
$$
Then we have 
\begin{align} 
 R_\pi^T =  \E_{\vtau \sim p_{\pi_0}} \left [\sum_{t=0}^{\Tminusone}  w_{t:t}(a_t, s_t)  \gamma_t  r_t \right ]~~~~~~~\text{\it(Our method)}, \label{ourexp}
\end{align}
where we replace $w_{0:t}$ in \eqref{stepexp} with $w_{t:t}$, based on Rao-backwellization conditioning on $(s_t,a_t)$. 
This gives an empirical estimator: 
$$
\textit{Our method:}~~~~~~~~~\hat\R^T_\pi   =  \sum_{t=0}^{\Tm} \sum_{i=1}^m \frac{1}{Z_t} \gamma_t   w_{t:t}^i r_{t}^i, 
$$
where $w_{t:t}^i = w_{t:t}(a_t^i, s_t^i)$ and $Z_t = m$ or $Z_t = \sum_{i=1}^m w_{t:t}^i$. 
Comparing this with the trajectory-wise and step-wise estimators, it is easy to expect that it yields smaller variance, 
when ignoring the estimation error of $w_{t:t}$.

\section{A motivating example}
\renewcommand{\Tminusone}{\Tm}
\label{app:example}
Here we provide an example when $w_{0:T}$ is exponential on the trajectory length $T$, yielding high variance in trajectory-wise and step-wise estimators in long horizon problems, while the variance of our stationary density ratio based importance weight $w_{t:t}$ stays to be a constant as $T$ increases. 

\begin{wrapfigure}{r}{0.35\textwidth}
  \begin{center}
    \includegraphics[width=0.3\textwidth]{density_ratio_policy_evaluation/figures/circle-mdp.png}
  \end{center}
  \vspace{-20pt}
\end{wrapfigure}
The MDP has $n$ states: $\mathcal{S}=\{0,1,\dots,n-1\}$, arranged on a circle (see the figure on the right), where $n$ is an odd number.  There are two actions, left ($\mathsf{L}$) and right ($\mathsf{R}$).  The left action moves the agent from the current state counterclockwise to the next state, and the right action has the opposite (clockwise) effect.  The deterministic reward is $0$ if taking action $\mathsf{L}$ and $1$ otherwise.  
In summary, we have for any $s$ and $a$ that
\begin{align*}
\T(s'|s,\mathsf{L}) &= \mathbb{I}(s' = s - 1 ~\text{mod}~ n) \\
\T(s'|s,\mathsf{R}) &= \mathbb{I}(s' = s + 1 ~\text{mod}~ n) \\
r(s,a) &= \mathbb{I}(a=\mathsf{R})\,.
\end{align*}

Suppose we are given two policies.  The behavior policy $\pi_0$ and target policy $\pi$ choose action $\mathsf{R}$ with probability $\rho$ and $1-\rho$, respectively. We focus on the average reward $(\gamma=1)$ here.

\paragraph{Claim \#1. Stationary density ratio $w_{t:t}$ stays constant as $t\rightarrow\infty$.}
First, note that the MDP is ergodic under either policy, as $n$ is odd.  Since $\pi_0$ and $\pi$ are symmetric, their stationary distributions are identical, that is, $d_\pi(s)/d_{\pi_0}(s) = 1$. In fact, both $d_\pi = d_{\pi_0}$ are uniform over $\Sset$.  Therefore, 
$$w_{t:t}(s, \mathsf R)=
\frac{d_\pi(s)\pi(\mathsf R |s)}{d_{\pi_0}(s)\pi_0( \mathsf R |s)} 
= \frac{\pi(\mathsf R |s)}{\pi_0( \mathsf R |s)} 
= \frac{\rho}{1-\rho}\,,$$
and similarly $w_{t:t}(s, \mathsf L) = (1-\rho)/\rho$. 
Both ratios are \emph{independent} of the trajectory length, and have \emph{zero} variance.

\paragraph{Claim \#2. Variance of trajectory-wise IS weight $w_{0:T}$ grows exponentially in $T$.} 
\begin{pro} \label{pro:toytraj}
Under the setting above, let $\vtau=\{s_t,a_t,r_t\}_{0\le t \le T}$ be a trajectory drawn from the behavior policy $\pi_0$, we have 
\begin{align*} 
&\var_{p_{\pi_0}}[ w_{0:T}(\vtau)] = A_\rho^{T+1} - 1,\\ 
&\var_{p_{\pi_0}}\left[w_{0:T}(\vtau) R^T(\vtau)\right] = 
B_{\rho,T} A_\rho^{T-1} - (1-\rho)^2,
\end{align*}
where 
\begin{align*}
    A_\rho :=\frac{\rho^3 + (1-\rho)^3}{(1-\rho)\rho}, &&
    B_{\rho,T} = \frac{(1-\rho)\rho}{\Tp} + \frac{(1-\rho)^4}{\rho}\,.
\end{align*}
Obviously, $A_\rho > 1$ for $\rho\neq 1/2$ and $
A_\rho = 1$ for $\rho=1/2$, and $B_{\rho,T}>0$ for large enough $T$. Therefore, the variance of both the trajectory-wise  importance weights and the corresponding estimator grow exponentially in the order of $A_\rho^{T}$.
\end{pro}
\paragraph{Remark}
When $\rho=1/2$, it reduces to the on-policy case of $\pi = \pi_0$, 
for which we can show that $\var_{p_{\pi_0}}[w_{0:T}(\vtau)]=0$ (since $w_{0:T}(\vtau)=1$), 
and $\var_{p_{\pi_0}}\left[w_{0:T}(\vtau) R^T(\vtau)\right]  = 1/(4(T+1))$.
\begin{proof} 
From the definition of the setting, it is easy to show that 
\begin{align*}
R^T(\vtau) = \frac{F(\vtau) }{T+1}, &&
w_{0:T}(\vtau) &= \prod_{t=0}^{\Tm} \frac{\pi(a_t|s_t)}{\pi_0(a_t|s_t)} = \left (\frac{1-\rho}{\rho}\right)^{2F(\vtau)-(\Tp)} 
\end{align*}
where 
$$
F(\vtau) = \sum_{t=0}^{\Tm} \mathbb{I}(a_t=\mathsf{R}). 
$$
%
Under policy $\pi_0$, $F(\vtau)$ follows a Binomial distribution ${Binomial}(\Tp, \rho)$.  
The first order moments can be easily calculated as follows 
\begin{align*}
    \E_{\vtau\sim p_{\pi_0}}[w_{0:T}(\vtau)] = 1, &&
    \E_{\vtau\sim p_{\pi_0}}[w_{0:T}(\vtau) R^T(\vtau)] =
    \E_{\vtau\sim p_{\pi}}[R^T(\vtau)] = 1-\rho.
 \end{align*}
 It remains to calculate the second order moments. 
 We achieve this by leveraging the moment-generating function (MGF) of Binomial distribution: 
\begin{align}\label{equ:phi}
\Phi(\lambda) := \E_{\vtau \sim p_{\pi_0}}[\exp(\lambda F(\vtau))] = (1-\rho + \rho \exp(\lambda))^{\Tp}, ~~~~~ \forall \lambda \in \RR. 
\end{align}
It will turn out be useful to consider the derivatives of $\Phi(\lambda)$: 
\begin{align*}
    \Phi'(\lambda) 
    & = \E_{\vtau \sim p_{\pi_0}}[\exp(\lambda F(\vtau)) F(\vtau)] \\
    & = (\Tp)(1-\rho + \rho \exp(\lambda))^{T} \rho \exp(\lambda), 
\end{align*}
and 
\begin{align}
\label{equ:phi22}
\begin{split} 
\Phi''(\lambda) &= \E_{\vtau \sim p_{\pi_0}}[\exp(\lambda F(\vtau)) F(\vtau)^2] \\
& = (\Tp) (1-\rho + \rho \exp(\lambda))^{T-1} (1-\rho + (\Tp)\rho \exp(\lambda)) \rho \exp(\lambda).
\end{split}
\end{align}
For convenience, define $C = (1-\rho)/\rho$, and we have 
\begin{align*}
    \E_{\vtau\sim p_{\pi_0}}[w_{0:T}(\vtau)^2] 
    & =  \E_{\vtau\sim p_{\pi_0}}[(C^{2F(\vtau)-(T+1)})^2]  \\
    & =  \Phi(4 \log C ) \cdot C^{-2(\Tp)}  \\
    & = \left [ (1-\rho+\rho C^4)C^{-2}\right ]^{\Tp}  \\
    & = A_\rho^{\Tp}, 
\end{align*}
where we use the fact that 
$
(1-\rho+\rho C^4)C^{-2}  = \frac{\rho^3 + (1-\rho)^3}{ (1-\rho)\rho} = A_\rho. 
$ 
Similarly, we have 
\begin{align*}
    & \E_{\vtau\sim p_{\pi_0}} \left [ \ws(\vtau)^2 R(\vtau)^2 
    \right] \\
    & = 
    \E_{\vtau\sim p_{\pi_0}} \left [ C^{4F(\vtau)-2(\Tp)}   {F(\tau)^2}\right ] /({\Tp})^2  \\
    & = \Phi''(4\log C) C^{-2(\Tp)} /{(\Tp)^2} \\
        & = ((1-\rho + \rho C^4)C^{-2})^{T-1} ({C}/{(\Tp)} + \rho C^4) \rho^2 \\
        & = B_{\rho, T} A_\rho^{T-1}
\end{align*}
where we use the fact that $B_{\rho,T} = ({C}/{(\Tp)} + \rho C^4) \rho^2$. 
It is then straightforward to calculate the variance from here. 
\end{proof}

\paragraph{Claim \#3. Variance of trajectory-wise WIS weight grows exponentially in $T$.}
Although weighted-IS (WIS) often improves over IS estimators by using self-normalized weights, it cannot eliminate the exponential dependence on the trajectory length. Here, we calculate the asymptotic variance of trajectory-wise WIS using delta method~\cite[Chapter~9]{OwenMcBook}. 
\begin{pro}
Let $\hat R_{n,\text{wis}}$ be the trajectory-wise WIS estimator of $R_\pi$ based on $n$ copies of independent trajectories drawn from $\pi_0$, we  have 
$$
\E_{p_{\pi_0}}[(\hat R_{n,\text{wis}} - R_\pi)^2] = \frac{1}{n} D_{\rho,T} A_{\rho}^T + o\left (\frac{1}{n}\right),
$$
where $D_{\rho,A} = B_{\rho,T}A_\rho^{-1}~ - ~ 2{(1-\rho)^3}/{\rho}  
~+~ (1-\rho)^2 A_\rho, $ with $A_{\rho}$ and $B_{\rho,T}$ defined in Proposition~\ref{pro:toytraj}. 
\end{pro}

\begin{proof} 
The asymptotic mean square error (MSE) of a 
self-normalized importance sampling estimator can be estimated using the delta method \cite[Chapter~9]{OwenMcBook}: 
\begin{align*} 
 & \E_{p_{\pi_0}}[(\hat R_{n,\text{wis}} - R_\pi)^2]  \\
&~~~~~~ = 
\frac{1}{n}\E_{\vtau \sim p_{\pi_0}}\left[\ws(\vtau)^2 (R(\vtau)-R_\pi)^2\right] ~+~ o\left (\frac{1}{n}\right). 
\end{align*}
Note that 
\begin{align*} 
    & \E_{\vtau \sim p_{\pi_0}}\left[\ws(\vtau)^2 (R(\vtau)-R_\pi )^2\right] \\
     & ~~~~~ 
     =  \E_{\vtau \sim p_{\pi_0}}\left[\ws(\vtau)^2 R(\vtau)^2\right] - 2R_\pi\E_{\vtau \sim p_{\pi_0}}[\ws(\vtau)^2 R(\vtau)] +
    R_\pi^2 \E_{\vtau \sim p_{\pi_0}}\left[\ws(\vtau)^2 \right], 
\end{align*}
where the first and third terms have been calculated in the proof of Proposition~\ref{pro:toytraj}. 
We just need to calculate the cross term:
\begin{align*}
    \E_{\vtau \sim p_{\pi_0}}[\ws(\vtau)^2 R(\vtau)] 
    & = 
        \E_{\vtau\sim p_{\pi_0}} \left [ C^{4F(\vtau)-2(\Tp)}   {F(\tau)}\right ] /({\Tp})  \\
& = \Phi'(4\log C) C^{-2(\Tp)} / (\Tp)\\
    & = \left [ (1-\rho + \rho C^4) C^{-2}\right]^{T} \rho C^2 \\
        & = {(1-\rho)^2}/{\rho} A_\rho^{T}. 
\end{align*}
Therefore, 
\begin{align*} 
\E_{\vtau \sim p_{\pi_0}}\left[\ws(\vtau)^2 (R(\vtau)-R_\pi )^2\right] 
& = 
B_{\rho,T}A_\rho^{T-1}~ - ~ 2  R_\pi{(1-\rho)^2}/{\rho} A_{\rho}^{T}  
~+~ R_\pi^2 A_\rho^{T+1} \\
& = D_{\rho,T} A_{\rho}^T,
\end{align*}
where 
\begin{align*} 
 D_{\rho,T} & :=  B_{\rho,T}A_\rho^{-1}~ - ~ 2R_\pi {(1-\rho)^2}/{\rho}  
~+~ R_\pi^2 A_\rho \\
& =  B_{\rho,T}A_\rho^{-1}~ - ~ 2{(1-\rho)^3}/{\rho}  
~+~ (1-\rho)^2 A_\rho.  
\end{align*}
We used $R_\pi = 1-\rho$ here. 
\end{proof}

\section{Proofs}\label{proofs}

\paragraph{Reproducing Kernel Hilbert Space (RKHS)} 
We start with a brief, informal introduction of RKHS. A symmetric function $k(s,s')$ is called positive definite if all matrices of form $[k(s_i, s_j)]_{ij}$ are  positive definite for any $\{s_i\}\subseteq\mathcal S $.  
Related to every positive definite kernel $k(s,s')$ is an unique RKHS $\mathcal H$ which is the closure of functions of form $f(s) = \sum_{i} a_i k(s,s_i)$, $\forall a_i \in \RR, ~ s_i \in \mathcal S$, equipped with 
a norm and 
inner product defined as 
\begin{align*}
    \la f, g\ra_{\mathcal H} =  \sum_{ij} a_i b_j  k(s_i,s_j), && 
    \norm{f}_{\mathcal H}^2 = \sum_{ij} a_i a_j k(s_i, s_j), 
\end{align*}
where we assume $g(x) = \sum_i b_i k(s, s_i)$.  
A simple yet important fact that our proof will leverage is that 
\begin{align*} 
\norm{f}_{\mathcal H} = \max_{g\in \mathcal F} \la f, ~ g \ra_{\mathcal H},
&&
\text{where} 
&&
\F = \{
g \in \mathcal H \colon \norm{g}_{\mathcal H} \leq 1\}. 
\end{align*} 
A key property of RKHS  is the so called reproducing property, which says 
\begin{align*}
f(s) = \la f(\cdot), ~ k(s,\cdot) \ra_{\mathcal H}, &&
\text{and hence} 
&&k(s,s')  = \la k(s, \cdot), ~ k(s', \cdot) \ra_{\mathcal H}. 
\end{align*}
In our proof, we will consider functions of form $f(s) = \E_{s'\sim d} [w(s') k(s,s')]$ for some function $w$ and distribution $d$, for which one can show that 
$$\max_{g\in \F} \la f, ~g\ra_{\mathcal H} = \norm{f}_{\mathcal H}  = \E_{s, s'\sim d} [w(s) w(s')  k(s,s') ]^{1/2};$$ 
this can be proved using the reproducing property as follows
\begin{align*} 
\norm{f}_{\mathcal H}^2 =  \la f, f\ra_{\mathcal H}  
& = \la  \E_{s\sim d} [w(s) k(\cdot,s)], ~~  \E_{s'\sim d} [w(s') k(\cdot,s')] \ra_{\mathcal H}   \\
& = \E_{s, s'\sim d} [w(s) w(s')\la   k(\cdot,s), ~k(\cdot,s') \ra_{\mathcal H}] \\
& = \E_{s, s'\sim d} [w(s) w(s')  k(s,s') ]. 
\end{align*}
For more introduction to RKHS, see \citep{scholkopf2001learning, berlinet2011reproducing, muandet2017kernel}, to name only a few.  

\begin{proof}[Proof of Theorem~\ref{thm:one}]
Note that $d_{\pi_0}(s,a|s') = \frac{d_{\pi_0}(s)\pi_0(a|s)\T(s'|s,a)}{d_{\pi_0}(s')}$.  Therefore, \eqref{eq:wes} is equivalent to
\begin{align*}
w(s') &= \E_{ (s,a) |s' \sim \pi_0} \bigg [w(s) \frac{\pi(a|s)}{\pi_0(a|s)} ~\bigg |~ s' \bigg] = \sum_{s,a}\frac{d_{\pi_0}(s)\pi_0(a|s)\T(s'|s,a)}{d_{\pi_0}(s')}w(s)\frac{\pi(a|s)}{\pi_0(a|s)}\\%
&= \frac{1}{d_{\pi_0}(s')}\sum_{s,a}\T(s'|s,a)\pi(a|s) d_{\pi_0}(s)w(s), ~~~~~~ \forall s'.
\end{align*}
Denote $g(s) \defeq d_{\pi_0}(s)w(s)$.  Since $d_{\pi_0}(s') > 0$ for all $s'$, we find that \eqref{eq:wes} is equivalent to 
\begin{equation}\label{eq:gsp}
    g(s') = \sum_{s,a} \T(s'|s,a)\pi(a|s) g(s), ~~~~~~~\forall s'. 
\end{equation}
This implies that $g(s)$ is invariant under Markov transition $\T(s'|s,a)\pi(a|s)$. Because $d_\pi(s)$ is the unique stationary distribution under the same Markov transition, \eqref{eq:gsp} holds if and only if $g(s) \propto d_\pi(s)$, or equivalently, $w(s) \propto w_{\pi/\pi_0}(s).$ This completes the proof. 
\end{proof}

\begin{proof}[Proof of Theorem~\ref{thm:rkhs}]

By the reproducing property of RKHS, we have $f(s) = \la f(\cdot), k(s, \cdot)\ra \ra_{\mathcal H}$.  This gives $L(w,f) = 
\la f, \phi^* \ra_{\mathcal H}$, where $\phi^*(\cdot)=\E_{\pi_0}[\Delta(w; \bar s, \bar a, \bar s')k(\bar s',  \cdot )]$. 
The results then follow by 
$$
\max_{f} L(w,f)^2 = \max_{f\in \F} \la f, \phi^* \ra_{\mathcal H}^2 =   \norm{\phi^*}_{\mathcal H}^2 = \E_{\pi_0} \left [ \Delta(w; ~s,a,s') \Delta(w; ~\bar s,\bar a,\bar s')  k(s', \bar s') \right ].
$$
\end{proof}

\begin{proof}[Proof of Lemma~\ref{lem:fixdpidisc}]
Assume $\gamma \in (0,1)$.  The definition in \eqref{dpi} gives $d_\pi(s) = (1-\gamma)\sum_{t=0}^\infty \gamma^t d_{\pi,t}(s)$. Therefore, 
\begin{align*}
    d_\pi(s') 
    & = (1-\gamma)\sum_{t=0}^\infty \gamma^t d_{\pi,t}(s') \\
    & = (1-\gamma) d_{0}(s') +  (1-\gamma) \sum_{t=1}^\infty \gamma^t d_{\pi,t}(s') \\
    & = (1-\gamma) d_{0}(s') +  (1-\gamma)\gamma \sum_{t=0}^\infty \gamma^t d_{\pi,t+1}(s') \\
    & = (1-\gamma) d_{0}(s') +  (1-\gamma)\gamma \sum_{t=0}^\infty \gamma^t \sum_{s} \T_\pi(s'|s) d_{\pi,t}(s)  \ant{\tiny \,\, $d_{\pi,t+1}(s')=\sum_{s,a}\T_\pi(s'|s) d_{\pi,t}(s)$}\\
    & = (1-\gamma) d_{0}(s') + \gamma \sum_{s} \T_\pi(s'|s) \left ( (1-\gamma)\sum_{t=0}^\infty \gamma^t  d_{\pi,t}(s) \right) \\
    & = (1-\gamma) d_{0}(s') + \gamma \sum_{s} \T_\pi(s'|s) d_\pi(s) \\
    & = (1-\gamma) d_{0}(s') + \gamma \sum_{s,a} \T(s'|s,a) \pi(a|s) d_\pi(s)\,. 
\end{align*}
Multiplying both sides by $f(s')$ and summing over $s'$, we get
$$
 \sum_{s'}d_\pi(s') f(s') = (1-\gamma)\sum_{s'} d_{0}(s')f(s') + \gamma \sum_{s,a,s'} \T(s'|s,a)\pi(a|s) d_\pi(s) f(s')\,. 
$$
Recall that $(s,a,s')\sim d_\pi$ denotes sampling from the joint distribution of $d_\pi(s,a,s') = d_\pi(s) \T(s',a|s)\pi(a|s)$. 
Note that under this joint distribution, the marginal distribution of $s'$ is different from $d_\pi(s)$.\footnote{This is different from the average reward case, in which $d_\pi(s)$ is the stationary distribution of $\T_\pi$.} 

The above equation is equivalent to 
$$
\E_{s'\sim d_{\pi}}[f(s')] = (1-\gamma) \E_{s'\sim d_0} [f(s')] + \gamma\E_{(s,a,s')\sim d_\pi} [f(s')]\,. 
$$
For notation, changing the dummy variable $s'$ in $\E_{s'\sim d_\pi}[\cdot]$ and $\E_{s'\sim d_0} [\cdot]$ to $s$ gives 
$$
\E_{s\sim d_{\pi}}[f(s)] = (1-\gamma) \E_{s\sim d_0} [f(s)] + \gamma\E_{(s,a,s')\sim d_\pi} [f(s')].  
$$
Therefore, 
$$
  \E_{(s,a,s')\sim d_\pi} [\gamma f(s') - f(s)] + (1-\gamma) \E_{s\sim d_0} [f(s)] = 0\,.   
$$
\end{proof}

\begin{proof}[Proof of Theorem~\ref{thm:discount}]
Define \begin{align*}
    \delta(g,~s') \defeq \gamma \sum_{s}\T_\pi(s'|s) g(s) -  g(s')  + (1-\gamma) d_0(s'),
\end{align*}
where $g$ is any function. Then by assumption, we have $g(s) = d_\pi(s)$ if and only if $\delta(g,~ s')=0$ for any $s'$. 
%
Replacing $d_\pi$ with $d_{\pi_0}$ and $f(s)$ with $w(s)f(s)$ in \eqref{ggd} gives
$$
\E_{(s,a,s')\sim d_{\pi_0}}[ w(s)f(s)- \gamma w(s')f(s')] = (1-\gamma)  \E_{s\sim d_0}[w(s)f(s)]\,.   
$$
Plugging it into the definition of $L(w,f)$ in \eqref{lwflambda}, 
we get  
\begin{align}
 \lefteqn{L(w,f)} \nonumber \\
  &= \gamma \E_{(s,a,s')\sim d_{\pi_0}}  [(\betar(a|s) w(s) - w(s')) f(s') ] +  (1-\gamma)\E_{s\sim d_0}[(1- w(s)) f(s)] \nonumber \\
  & = \gamma \E_{(s,a,s')\sim d_{\pi_0}}  [(\betar(a|s) w(s)f(s')] - \E_{s\sim d_{\pi_0}}[ w(s) f(s) ] +  (1-\gamma)\E_{s\sim d_0}[f(s)] \label{lwf2} \\
  & = \gamma \E_{(s,a,s') \sim d_{\pi}}[w_{\pi/\pi_0}(s)^{-1} w(s) f(s')] - \E_{s\sim d_\pi}[w_{\pi/\pi_0}(s)^{-1} w(s) f(s)] +  (1-\gamma)\E_{s\sim d_0}[f(s)] \nonumber \\
  &= \sum_{s'} \delta(g, s') f(s')\,, \nonumber 
\end{align}
where we have defined $g(s) \defeq d_\pi(s)w_{\pi/\pi_0}(s)^{-1} w(s)$. Therefore, $L(w,f) =0$ for $\forall f$ is equivalent to $\delta(g,s')=0$ for $\forall s'$, 
which is in turn equivalent to $g(s) = d_\pi(s)$.  Therefore, we have $w(s) = w_{\pi/\pi_0}(s)$ when $0<\gamma < 1$, and $g(s) \propto d_\pi(s)$, or equivalently, $w(s) \propto w_{\pi/\pi_0}(s)$, when $\gamma=1$.
\end{proof} 

\begin{proof}[Proof of Lemma~\ref{lem:lwf}]
Note that 
\begin{align*}
    \Pi f(s) 
    & =  f(s) - \gamma \E_{(s',a)|s\sim d_\pi}[f(s')] \\
    & =  f(s) - \gamma \E_{(s',a)|s\sim d_{\pi_0}}[\betar(a|s) f(s')]\,. 
\end{align*}
Following the proof of Theorem~\ref{thm:discount} up to \eqref{lwf2}, we have 
\begin{align*}
  \lefteqn{L(w,f)} \\
  &= \gamma \E_{(s,a,s')\sim d_{\pi_0}}  [(\betar(a|s) w(s) - w(s')) f(s') ] +  (1-\gamma)\E_{s\sim d_0}[(1- w(s)) f(s)] \\
  & = \gamma \E_{(s,a,s')\sim d_{\pi_0}}  [(\betar(a|s) w(s)f(s')] -  \E_{s\sim d_{\pi_0}}[ w(s) f(s) ] +  (1-\gamma)\E_{s\sim d_0}[f(s)] \\
  & = - \E_{s\sim d_{\pi_0}} \left [  w(s) \left( f(s) - \gamma \E_{(s',a)|s\sim d_{\pi_0}}[\betar(a|s) f(s')] \right)  \right ] +  (1-\gamma)\E_{s\sim d_0}
  \left [f(s) \right ] \\
  & = - \E_{s\sim d_{\pi_0}} [w(s) \Pi f(s)]  + (1-\gamma) \E_{s \sim d_0}[f(s)]\,. 
\end{align*}
Since $L(\ws, f) =0$, we have  
\begin{align*}
L(w,f)  & = L(w,f) - L(\ws, f) \\
 &  = \E_{s\sim \dss_{\pi_0}}[(\ws(s) - w(s)) \Pi f(s)]\,. 
 \end{align*}
\end{proof}

\begin{lem}\label{invBell}
For any function $g(s)$, define $\bar g = \E_{s\sim d_\pi}[ g(s)]$ and 
%
\begin{align}
  f_g(s) = \begin{cases}
\E_{\vtau\sim p_\pi}  [\sum_{t=0}^\infty\gamma^t g(s_t)~|~s_0=s ]  & \text{when $0< \gamma < 1$}, \\[5pt]
\displaystyle \lim_{T\to\infty}\E_{\vtau\sim p_\pi}   [\sum_{t=0}^T  g(s_t) -\bar g  ~|~s_0=s ]  & \text{when $\gamma = 1$}, 
\end{cases}  
\end{align}
assuming the limits above exist. 
Then, when $0<\gamma<1$, $f=f_g$ is the unique solution of $g = \Pi f$; 
when $\gamma =1$ and $\T_\pi$ is irreducible, all the solutions of $g-\bar g = \Pi f$ satisfies $f = f_g + \mathrm{constant}$. 
\end{lem}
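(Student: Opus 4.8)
The plan is to read off from $\Pi f(s) = f(s) - \gamma\,\E_{(s',a)|s\sim d_\pi}[f(s')]$ that $\Pi$ is nothing but $I - \gamma P$, where $P$ is the one-step transition operator of $\pi$, $(Pf)(s) := \sum_{s'}\T_\pi(s'|s)f(s') = \E_{(s',a)|s\sim d_\pi}[f(s')]$; on our finite state space $P$ is a bounded linear map with $\|Pf\|_\infty \le \|f\|_\infty$ and $P\mathbf 1 = \mathbf 1$. Since $\E_{\vtau\sim p_\pi}[g(s_t)\mid s_0 = s] = (P^t g)(s)$, in both cases $f_g$ is an explicit series in the iterates $P^t g$: $f_g = \sum_{t\ge 0}\gamma^t P^t g$ when $0<\gamma<1$, and $f_g = \sum_{t\ge 0}(P^t g - \bar g)$ when $\gamma = 1$. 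So the lemma amounts to checking that these series invert $I - \gamma P$ (resp.\ solve the Poisson equation), together with the corresponding uniqueness claim.

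For the discounted case I would first note that the series for $f_g$ converges absolutely and uniformly, since $\|\gamma^t P^t g\|_\infty \le \gamma^t\|g\|_\infty$ and $\gamma<1$; this licenses applying $I - \gamma P$ term by term, which telescopes to $(I-\gamma P)f_g = \sum_{t\ge0}\gamma^t P^t g - \sum_{t\ge0}\gamma^{t+1}P^{t+1}g = g$, so $f_g$ solves $g = \Pi f$. Uniqueness is a contraction argument: if $\Pi h = 0$ then $h = \gamma P h$, hence $h = \gamma^t P^t h$ and $\|h\|_\infty \le \gamma^t\|h\|_\infty$ for every $t$, forcing $h\equiv 0$; thus the solution is unique.

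For the average case the key point is that $\bar g = \E_{s\sim d_\pi}[g(s)]$ with $d_\pi$ the stationary distribution of $\T_\pi$, so $P$ fixes the constant $\bar g$ (subtracting $\bar g$ is also exactly what makes the series for $f_g$ summable). Using the assumed existence of the limit defining $f_g$ and the continuity of $P$ on the finite-dimensional space, I push $I-P$ inside the limit and telescope: $Pf_g = \sum_{t\ge0}(P^{t+1}g - \bar g) = f_g - (g - \bar g)$, i.e.\ $\Pi f_g = g - \bar g$, so $f_g$ is a solution. For the ``unique up to a constant'' claim, any two solutions of $g - \bar g = \Pi f$ differ by an $h$ with $Ph = h$; the main---though mild---obstacle is to conclude such an $h$ is constant, for which I would invoke a maximum-principle argument: if $h$ attains its maximum at $s^*$, then $h(s^*) = \sum_{s'}\T_\pi(s'|s^*)h(s')$ forces $h(s') = h(s^*)$ for every $s'$ reachable from $s^*$ in one step, and irreducibility propagates this to all of $\Sset$. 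Hence $h$ is constant and $f = f_g + \mathrm{const}$, which finishes the proof.
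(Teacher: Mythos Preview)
Your proposal is correct and follows essentially the same approach as the paper: both verify $\Pi f_g = g$ (resp.\ $g-\bar g$) by telescoping the series $\sum_t \gamma^t P^t g$ (resp.\ $\sum_t (P^t g - \bar g)$), and both obtain uniqueness from $\|\gamma P\|_\infty \le \gamma < 1$ in the discounted case and from the fact that $P$-harmonic functions on an irreducible chain are constant in the average case. The only cosmetic difference is that for the latter the paper first iterates to $P^n$ and picks $n$ with $\T_\pi^n(\tilde s\mid s)>0$, whereas you use the one-step maximum-principle propagation; these are equivalent standard arguments.
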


\begin{proof}[Proof of Lemma~\ref{invBell}]
Consider first the discounted case $\gamma \in (0,1)$, we have 
\begin{align*}
    \Pi f_g(s) & = f_g(s) - \gamma \E_{(s',a)|s\sim d_\pi} [f_g(s')] \\
    & =  \E[\sum_{t=0}^\infty \gamma^t g(s_t) ~|~ s_0 = s ] - \gamma  \E_{(s',a)|s\sim d_\pi} \left[ \E[\sum_{t=0}^\infty \gamma^t g(s_t) ~|~ s_0 = s ] \right] \\
    &  =  \E[\sum_{t=0}^\infty \gamma^t g(s_t) ~|~ s_0 = s ] -   \E[\sum_{t=0}^\infty \gamma^{t+1} g(s_{t+1}) ~|~ s_0 = s ] ] \\
    & = \E[g(s_0) ~|~s_0 = s]  \\
    & = g(s)\,. 
\end{align*}
\newcommand{\deltaf}{{\delta f}}
For the uniqueness, assume $g = \Pi f_1$ and $g  =\Pi f_2$, and $\deltaf = f_1 - f_2$, then $\Pi \deltaf = 0$, where
$$
\deltaf(s) = \gamma \sum_{s'} \T_\pi(s'|s) \deltaf(s'). 
$$
If $0< \gamma < 1$, we have 
\begin{align*}
    \norm{\deltaf}_\infty 
    = \norm{\gamma \sum_{s'} \T_\pi(s'|s) \deltaf(s')  }_\infty
    \leq \gamma \norm{\deltaf}_\infty\,,
\end{align*}
which implies $\norm{\deltaf}_\infty=0$.

For the average reward case $\gamma = 1$, we have 
\begin{align*}
    \Pi f_g(s) & = f_g(s) - \E_{(s',a)|s\sim d_\pi} [f_g(s')] \\ 
    & =  \lim_{T\to \infty} \E[\sum_{t=0}^T (g(s_t) - \bar g) ~|~ s_0 = s ] - \E_{(s',a)|s\sim d_\pi} [ \E[\sum_{t=0}^T  (g(s_t) - \bar g) ~|~ s_0 = s ] ] \\
    & =  \lim_{T\to \infty}  \E[\sum_{t=0}^T (g(s_t) - \bar g) ~|~ s_0 = s ] -  \E[\sum_{t=0}^T  (g(s_{t+1}) - \bar g) ~|~ s_0 = s ]  \\
    & = \E[g(s_0) - \bar g ~|~s_0 = s]  \\
    & = g(s) - \bar g\,. 
\end{align*}
For the uniqueness, assume $g = \Pi f_1$ and $g  =\Pi f_2$, and $\deltaf = f_1 - f_2$, 
then $\deltaf = \sum_{s'} \T_\pi(s'|s) \deltaf(s')$, which implies $\deltaf = \sum_{s'} \T_\pi^n(s'| s) \deltaf(s')$, where $\T_\pi^n$ is the $n$-step transition probability function. 
If $\deltaf$ is not a constant, there must exists a state $\tilde s$ such that $\deltaf(\tilde s) < \norm{\deltaf}_\infty$. 
Since $\T_\pi$ is irreducible, there exists a $n>0$ such that $\T_{\pi}^n(\tilde s|s) > 0$. Therefore, 
\begin{align*}
    \norm{\deltaf}_\infty 
    = \norm{ \T_\pi^n(\tilde s|s) \deltaf(\tilde s)  + \sum_{s'\neq \tilde s} \T_\pi^n(s'|s) \deltaf(s')  }_\infty
    <  \norm{\deltaf}_\infty,  
\end{align*}
which is contradictory. Therefore, $\deltaf$ must be a constant. 
In fact, functions that satisfies $\deltaf = \sum_{s'} \T_\pi(s'|s) \deltaf(s')$ is called harmonic~\cite[Lemma~1.16]{levin2017markov}.
\end{proof}

\begin{proof}[Proof of Theorem~\ref{wsmax}]
By taking $f_g$ such that 
$g(s) = \mathbf{1}(s=\tilde s)$,  
we have 
$$
L(w,f_g)= \E_{s\sim d_{\pi_0}}[(\ws(s)-w(s))g(s)] = d_\pi(\tilde s) - w(\tilde s) d_{\pi_0}(\tilde s). 
$$
We just need to calculate $f_g$, following Lemma~\ref{invBell}. 

Note that $\T_\pi^t(\tilde s~|~s) = \E_{\vtau\sim p_{\pi}}[\mathbf{1}(s_t=\tilde s) ~|~ s_0 = s)].$
When $0<\gamma <1$, we have 
\begin{align*}
    f_g(s) 
    & = 
    \E_{\vtau \sim p_\pi} \left [\sum_{t=0}^\infty \gamma^t \mathbf{1}(s_t=\tilde s) | s_0 = s \right ]  \\
    & = 
    \sum_{t=0}^\infty \gamma^t \T_\pi^t (\tilde s | s).  
\end{align*}
For the average reward case, note that $\bar g = \E_{s\sim d_\pi} [\mathbf{1}(s=\tilde s)]  = d_\pi (\tilde s)$, so 
\begin{align*}
    f_g(s)
    & = 
    \E_{\vtau \sim p_\pi}  \left [\sum_{t=0}^\infty \mathbf{1}(s_t= \tilde s) - d_\pi(\tilde s) | s_0 = s \right ] \\
    & = 
    \sum_{t=0}^\infty (\T_\pi^t (\tilde s | s) - d_\pi(\tilde s) ) . 
\end{align*}
Similarly, we take $g(s) = \mathbf{1}(s=\tilde s) /d_{\pi_0}(\tilde s)$, 
and obtain bounds for $\norm{\ws - w}_\infty$. 
\end{proof}

\begin{proof}[Proof of Theorem~\ref{bund}]
Define $r_\pi(s) = \E_{a|s\sim \pi}[r(s,a)] = E_{a|s\sim \pi_0}[\betar(a|s) r(s,a)]$, then 
$$
R_\pi[w] = \E_{s\sim d_{\pi_0}} [w(s) \betar(a|s) r(s,a)] = \E_{s\sim d_{\pi_0}} [w(s)  r_\pi(s)]. 
$$
We consider the average reward case first. Following the definition of the operator $\Pi$ in \eqref{equ:pif} and the average reward Bellman equation, 
we have  
$$
\Pi V^\pi(s) =   r_\pi(s) - \R_\pi.
$$
Following Lemma~\ref{lwf}, we have  
$$
L(w,f) = \E_{s\sim d_{\pi_0}} [ (w(s)-\ws(s))(r_\pi(s) - \R_\pi(s))] =\R_\pi[\ws] -\R[w] = \R_\pi - \R_\pi[w]. 
$$
For the discounted case, following the definition of $\Pi$ and the discounted Bellman equation \eqref{equ:discbell}, we have 
$\Pi V_\pi(s) = r_\pi$, which gives 
$$
L(w,f) = \E_{s\sim \pi_0} [ (\ws(s) - w(s)) r_\pi(s)] = \R_\pi[\ws] -\R[w] = \R_\pi - \R_\pi[w].   
$$
\end{proof}

\begin{algorithm}[t] 
\caption{Main Algorithm (Average Reward Case)} 
\label{alg:main}
\begin{algorithmic} 
\STATE {\bf Input}: Transition data $\mathcal D = \{s_t, a_t, s'_t, r_t\}_t$ from simulator from the behavior policy  $\pi_0$; a target policy $\pi$ for which we want to estimate the expected reward. Denote by $\betar(a|s)=\pi(a|s)/\pi_0(a|s)$. 
\STATE {\bf Initial} the density ratio $w(s) = w_\theta(s)$ to be a neural network parameterized by $\theta$. 
\FOR{iteration = 1, 2, ...}
\STATE Randomly choose a batch $\mathcal M$ of size $m$ from the transition data $\mathcal D$, $\mathcal M \subset \{1,\ldots, n\}$. 
\STATE {\bf Update} the  parameter $\theta$ by $\theta \gets \theta - \epsilon\nabla_\theta \hat D(w_\theta / z_{w_\theta})$, where 
$$
\hat D(w) = \frac{1}{|\mathcal M|} \sum_{i,j\in \mathcal M} \Delta(w, s_i, a_i, s_i') \Delta(w, s_j, a_j, s_j') k(s_i',  s_j'), 
$$
and $z_{w_\theta}$ is a normalization constant $z_{w_\theta} =\frac{1}{|\mathcal M|} \sum_{i\in \mathcal M} w_\theta(s_i)$. 
\ENDFOR
\STATE {\bf Output}: 
Estimate the expected reward of $\pi$ by 
$\hat \R_\pi  = \sum_{i=1}^n  v_i r_i / \sum_{i=1}^n v_i$, 
where $v_i = w_\theta(s_i) \betar(a_i, s_i)$. 
\end{algorithmic} 
\end{algorithm}

\begin{algorithm}[t] 
\caption{Main Algorithm (Discounted Reward Case)}  
\label{alg:discounted}
\begin{algorithmic} 
\STATE {\bf Input}: Transition data $\mathcal D = \{s_t, a_t, s'_t, r_t\}_{t}$ from the behavior policy  $\pi_0$; a target policy $\pi$ for which we want to estimate the expected reward. Denote by $\betar(a|s)=\pi(a|s)/\pi_0(a|s)$. Discount factor $\gamma\in(0,1]$. 
\STATE {\bf Augment} the data with dummy data $\{s_{-1},a_{-1}, s'_{-1}, r_{-1}\}$ for which $r_{-1}=0$,~ $s_{-1}' = s_0$ and $\Delta(w; ~ s_{-1}, a_{-1}, s'_{-1}) := 1 -w(s_0)f(s_0)$. Add them to $\mathcal D$ to form an augment dataset $\tilde{\mathcal{D}}$. 
\STATE {\bf Initial} the density ratio $w(s) = w_\theta(s)$ to be a neural network parameterized by $\theta$. 
\FOR{iteration = 1, 2, ...} 
\STATE Randomly choose a batch $\mathcal M\subseteq \{1,\ldots,n\}$ from the augmented transition data $\tilde{\mathcal D}$, by selecting time $t$ with probability proportional to $\gamma^{t+1}$.
%
%
%
%
%
\STATE {\bf Update} the  parameter $\theta$ by $\theta \gets \theta - \epsilon\nabla_\theta \hat D(w_\theta / z_{w_\theta})$, where 
$$
\hat D(w) = \frac{1}{|\mathcal M|} \sum_{i,j\in \mathcal M} \Delta(w, s_i, a_i, s_i') \Delta(w, s_j, a_j, s_j') k(s_i',  s_j'), 
$$
and $z_{w_\theta}$ is a normalization constant $z_{w_\theta} =\frac{1}{|\mathcal M|} \sum_{i\in \mathcal M} w_\theta(s_i)$. 
\ENDFOR
\STATE {\bf Output}: 
Estimate the expected reward of $\pi$ by 
$\hat \R_\pi  = \sum_{i=1}^n  v_i r_i / \sum_{i=1}^n v_i$, 
where $v_i = w_\theta(s_i) \betar(a_i, s_i)$. 
\end{algorithmic} 
\end{algorithm}

\section{Algorithm Details}
Algorithm~\ref{alg:main} summarizes our main algorithm for the average reward case, 
where we approximate the mini-max loss function in \eqref{mmd} using empirical averaging of observed data. 

The algorithm for the discounted case follows the same idea, but requires some modification due to the additional term in \eqref{lwflambda}. 
To handle it in a notionally convenient way, we find it is useful to introduce
a dummy transition pair $\{s_{-1},a_{-1}, s_{-1}', r_{-1}\}$ at time $t= -1$, 
for which we define $s_{-1}' = s_0$, $r_{-1} = 0$ and  
$\Delta(w; ~ s_{-1}, a_{-1}, s'_{-1}) := 1 -w(s_0)f(s_0)$.  
Related, we define an augmented discounted visitation distribution via
\begin{align}\label{dtilde}
\tilde d_\pi(s) = \gamma d_{\pi,t}(s) + (1-\gamma) d_{\pi,-1}(s)
= (1-\gamma)\sum_{t=-1}^\infty \gamma^{t+1} d_{\pi, t}(s). 
\end{align}
Under this notation, the loss \eqref{lwflambda} of discounted case is rewritten into a form identical to the average reward case: 
\begin{align*}
    L(w,f) 
    & =\gamma \E_{(s,a,s')\sim d_{\pi_0}}  [ \Delta(w; s,a,s') f(s') ] +  (1-\gamma)\E_{s\sim d_0}[(1- w(s)) f(s)] \\
    & = \E_{(s,a,s')\sim \tilde d_{\pi_0}}  [ \Delta(w; s,a,s') f(s') ]. 
\end{align*}
Therefore, following Theorem~\ref{thm:rkhs}, we have 
\begin{align}\label{mmddiscount}
\begin{split}
\max_{f \in \F} L(w,f)^2 &= 
\E_{\tilde d_{\pi_0}} \left [ \Delta(w; ~s,a,s') \Delta(w; ~\bar s,\bar a,\bar s')  k(s', \bar s') \right ], 
\end{split}
\end{align}
when $\F$ is the ball of RKHS with kernel $k(s',\bar s')$. 

We can further approximate the expectation $\E_{\tilde d_{\pi_0}}[\cdot]$ given 
a set of augmented trajectories $\tilde{\mathcal{D}} = \{s_t,a_t, s_t', r_t\}_{t=-1}^T$. 
Following \eqref{dtilde}, this can be done by  randomly drawing (with replacement) data at time $t$ with probability proportional to $\gamma^{t}$. Let $\{s_t,a_t, s_t', r_t\}_{t\in \mathcal M}$ be a subset of $\tilde{\mathcal{D}}$ generated this way, and the mini-max loss in \eqref{mmddiscount} can be approximated by 
\begin{align*}
\begin{split}
\max_{f \in \F} L(w,f)^2 &\approx 
\frac{1}{|\mathcal M|} \sum_{i,j\in \mathcal M} \Delta(w, s_i, a_i, s_i') \Delta(w, s_j, a_j, s_j') k(s_i',  s_j'). 
\end{split}
\end{align*}
This equation is identical to the one in Algorithm~\ref{alg:main} for the average case, but differs in the way the minibatch $\mathcal M$ is generated: 
it includes the dummy transition at time $t=-1$ with probability $(1-\gamma)$ and select time $t$ with discounted probability $\gamma^{t+1}$. 
See Algorithm~\ref{alg:discounted} for the summary of the procedure. 
\section{Information on SUMO Traffic Simulator}\label{sumo}
We provide details of the SUMO traffic simulator and how we formulate it as a standard reinforcement learning problem. 

\paragraph{States for SUMO}
A states of a traffic should provide us with enough information to control the traffic light. 
A complex way is an image-like representation of the traffic vehicle around the traffic light intersection \cite{van2016coordinated}.
Here, to simplify the problem, we add lane detectors around traffic light intersections, 
and count the total number of vehicles on each lane as states $s_t$.
This should give us enough, though not perfect, information to guide the traffic light agent to choose its action.

\paragraph{Actions}
For a standard crossing intersection, its traffic light will have a program for 8 phases: ``Straight signal for North-South'', ``Turn-left signal for North-south'', ``Straight signal for East-West'', ``Turn-left signal for East-west'' and their corresponding ``yellow light'' slow down signals.
Here, we simplify these 4 phases into actions $a_t$ for each traffic light, 
where we let one big time step $t$ in reinforcement learning setting to be 6 real time steps in SUMO simulator.
Within each big time step $t$, we add a transition of 3 real time steps ``yellow light'' phase as a buffer to prevent vehicles for ``emergency stop'' if our agent decides to change light status ($a_t\neq a_{\Tminusone}$).

\paragraph{Rewards}
Our goal is to minimize the total travelling time for all vehicles.
Thus, we could set the negative of current aggregate total number of vehicles during the one big time step as reward $r_t$.
To simplify, we can just consider 6 times the current total number of vehicle as a approximation of $r_t$ to make our system simpler.

\paragraph{Policy}
We use linear policy with the final softmax layer as probability for each action.
We train a policy $\pi_*$ using Cross entropy(CE) method for 10 iterations and set it to be the target policy.
And we set the policies at the training iteration 6, 7, 8, 9 as behavior policies,
which correspond to x-ticks 1-4 in Figure~\ref{fig:sumo-single}(c).

\paragraph{Other details}
To simulate on our given network, we also need to design route documents for a vehicle to follow. 
Each route is a set of roads that connect any two exit nodes from the map. 
To make simple but reasonable routes for the vehicle, 
we constrain our routes with at most one turn in the network to avoid detours.
We control each route with a fixed probability (different from each route) every time step to generate a vehicle, 
to guarantee a randomized environment.

\end{document}